\documentclass[lettersize,journal]{IEEEtran}

\usepackage{enumitem}
\usepackage[numbers,sort&compress]{natbib}
\usepackage{url}
\usepackage{amsmath,amssymb,amsfonts}
\newcommand\numberthis{\addtocounter{equation}{1}\tag{\theequation}}
\usepackage{mathtools}
\usepackage{bm}
\usepackage{soul}
\usepackage{stmaryrd}
\usepackage{algorithmic}
\usepackage{graphicx}
\usepackage{multirow}
\usepackage{subcaption}
\usepackage{dsfont}
\captionsetup*[subfigure]{position=bottom}
\usepackage{textcomp}
\usepackage[usenames,dvipsnames]{xcolor}
\def\BibTeX{{\rm B\kern-.05em{\sc i\kern-.025em b}\kern-.08em
T\kern-.1667em\lower.7ex\hbox{E}\kern-.125emX}}
\usepackage{amsthm}
\theoremstyle{definition}
\newcounter{Cequ}

\usepackage{xr-hyper}
\usepackage{hyperref}    
\usepackage{lipsum}
\usepackage{scalerel}
\usepackage{stackengine,wasysym}
\usepackage{cleveref}
\usepackage{sistyle}
\usepackage{arydshln}
\SIthousandsep{,}
\usepackage{setspace}
\usepackage{hyperref}
\usepackage{marginnote}
\setlength{\marginparwidth}{1.6cm}
\setlength{\marginparsep}{1mm}

\usepackage{tabularray}
\UseTblrLibrary{booktabs}

\input{mysymbol.sty}
\newtheorem{problem}{Problem}


\usepackage{kantlipsum}
\usepackage[breakable, theorems, skins]{tcolorbox}
\tcbset{enhanced}

\makeatletter
\newcounter{qrr@oldeq}
\newcounter{qrr@oldsubeq}
\newcounter{qrr@realeq}
\renewenvironment{subequations}{%
  \refstepcounter{equation}%
  \protected@edef\theparentequation{\theequation}%
  \setcounter{parentequation}{\value{equation}}%
  \setcounter{equation}{0}%
  \def\theequation{\theparentequation\alph{equation}}%
  \ignorespaces
}{%
  \setcounter{qrr@oldeq}{\value{parentequation}}%
  \setcounter{qrr@oldsubeq}{\value{equation}}%
  \setcounter{equation}{\value{parentequation}}%
  \ignorespacesafterend
}
\newenvironment{subequations*}{%
  \setcounter{qrr@realeq}{\value{equation}}%
  \let\theparentequation\theequation%
  \patchcmd{\theparentequation}{equation}{parentequation}{}{}%
  \setcounter{parentequation}{\numexpr\value{qrr@oldeq}-1}%
  \setcounter{equation}{\value{qrr@oldsubeq}}%
  \def\theequation{\theparentequation\alph{equation}}%
  \refstepcounter{parentequation}%
  \ignorespaces
}{%
  \setcounter{qrr@oldeq}{\value{parentequation}}%
  \setcounter{qrr@oldsubeq}{\value{equation}}%
  \setcounter{equation}{\value{qrr@realeq}}%
  \ignorespacesafterend
}
\makeatother

\makeatletter
\patchcmd{\@maketitle}
  {\addvspace{0.5\baselineskip}\egroup}
  {\addvspace{-1\baselineskip}\egroup}
  {}
  {}
\makeatother

\begin{document}

\title{
Learning to Transmit with Provable Guarantees in Wireless Federated Learning

\thanks{ 
The research was sponsored by the Army Research Office and was accomplished under Cooperative Agreement Number W911NF-19-2-0269. 
The views and conclusions contained in this document are those of the authors and should not be interpreted as representing the official policies, either expressed or implied, of the Army Research Office or the U.S. Government. 
The U.S. Government is authorized to reproduce and distribute reprints for Government purposes notwithstanding any copyright notation herein.
Preliminary results were presented at~\cite{li2022power}.
\newline
E-mails: \{boning.li, segarra\}@rice.edu, \{jake.b.perazzone.civ, ananthram.swami.civ\}@army.mil.}}

\author{
Boning Li$^\star$,  
Jake Perazzone$^\dag$, 
Ananthram Swami$^\dag$,
and Santiago Segarra$^\star$ \\
\textit{$^\star$Rice University, USA \hspace{1cm}  $^\dag$US DEVCOM Army Research Lab., USA}
}

\maketitle
\begin{abstract}
We propose a novel data-driven approach to allocate transmit power for federated learning (FL) over interference-limited wireless networks.
The proposed method is useful in challenging scenarios where the wireless channel is changing during the FL training process and when the training data are not independent and identically distributed (non-i.i.d.) on the local devices.
Intuitively, the power policy is designed to optimize the information received at the server end during the FL process under communication constraints.
Ultimately, our goal is to improve the accuracy and efficiency of the global FL model being trained.
The proposed power allocation policy is parameterized using \revminor{graph convolutional networks (GCNs),} and the associated constrained optimization problem is solved through a primal-dual (PD) algorithm.
Theoretically, we show that the formulated problem has a zero duality gap and, once the power policy is parameterized, optimality depends on how expressive this parameterization is.
Numerically, we demonstrate that the proposed method outperforms existing baselines under different wireless channel settings and varying degrees of data heterogeneity.
\end{abstract}

\begin{IEEEkeywords}
Federated learning, graph neural networks, power allocation, primal-dual learning, wireless interference networks.
\end{IEEEkeywords}

\vspace{-1em}
\section{Introduction}\label{s:intro}
Federated learning (FL) enables collaborative training involving multiple workers and one (or more) server(s)~\cite{mcmahan2017communication}. 
In an iterative manner, a global model is first broadcast to all workers and then updated by integrating local models from workers that participate in the current iteration.
Those local models are trained exclusively using the on-device datasets of the corresponding workers. 
Since only model parameters are shared and no explicit transmission of raw data ever occurs, FL is particularly favorable in settings where privacy concerns or policy constraints limit or restrict data sharing, such as in autonomous driving~\cite{li2021privacy} and digital healthcare~\cite{xu2021federated}. 
Given the iterative nature of FL, a large number of local model integration steps may be necessary until the convergence of the global model.
In many studies~\cite{zhang2022federated,shahid2021communication,chen2021communication}, it is shown that communications between the workers and the server are a major bottleneck of FL performance.
While traditional wired communication is being replaced with fast and reliable wireless connections as in the 5G mobile network, smart home devices, and unmanned aerial systems, the growing computational power of edge devices has increased interest in studying FL scenarios where the data is located at the edge of wireless  networks~\cite{tran2019federated,wang2019adaptive,dong2020communication,girgis2021shuffled,hanna2021quantization}.

The specific demands of FL motivate many novel problems in the management of wireless networks.
For example, one may seek to perform power allocation to achieve faster convergence of the global model, where a smart power policy is essential to ensuring the quality of wireless services~\cite{chiang2008power,choi2016power,matthiesen2020globally,chen2020joint,xia2020multi,yoshida2020mab}.
To give a more specific instance,~\cite{chen2020joint} proposed a joint optimization scheme of power, worker, and resource management, \revminor{taking into account} the impact of wireless factors on the FL convergence rate. 
However, only static channels were considered, making it rather impractical for real-world deployment where the required duration of the FL process may well exceed the channel coherence time.
Without knowing the wireless channels a priori,~\cite{xia2020multi} and~\cite{yoshida2020mab} have leveraged multi-armed bandit worker scheduling to reduce FL time consumptions in the presence of heterogeneous data and fluctuating resources, respectively.
However, their applicability is limited due to the assumption of orthogonal channels, which over-simplifies the power optimization in most realistic scenarios.

\revmajor{Outside of the FL context, power allocation problems usually seek to maximize a problem-specific objective subject to simple constraints~\cite{amiri2018machine,zhang2021scalable,nasir2019multi,kumar2021adaptive}.
The objective could be sum-rate, energy efficiency, or proportional fairness, with typical constraints being a box constraint on allocated power.
Challenging power allocation problems are often formulated in interference-limited networks.
These networks differ from orthogonal channels, where signals are isolated to prevent interference. 
Instead, they involve non-orthogonal channels sharing the same time and frequency resources, potentially improving capacity and reducing latency but compromising link reliability~\cite{zhao2016noma}.
Mathematically, this introduces NP-hard optimization challenges, as the aforementioned objectives adopt sum-of-fractions forms due to the incorporation of interference~\cite{salaun2018optimal,freund2001solving}.
}
Since the globally optimal solutions are practically infeasible for moderately-sized networks (tens of workers)~\cite{matthiesen2018optimization}, approximate optimization and statistical learning methods have been exploited to achieve efficient and near-optimal power policies. 
For instance,~\cite{fang2016energy,zhou2016energy} appealed to matching theory,~\cite{dong2014optimal,fang2019optimal} to Lagrangian dual decomposition, and~\cite{yang2017unified,su2020energy,razaviyayn2014successive} to successive concave approximation. 
\revminor{Turning to another set of methods}, machine learning -- in particular, graph neural networks (GNNs) -- has been recently applied to solve power allocation problems~\cite{li2022graph,zhao2021distributed,zhao2022link,eisen2020optimal,chowdhury2021unfolding,chowdhury2021ml}.
For example, to maximize sum-rate in interference networks, a suboptimal power allocation algorithm was enhanced with augmented parameters learned by graph convolutional networks (GCNs) for both single-input single-output~\cite{chowdhury2021unfolding} and multiple-input multiple-output~\cite{chowdhury2021ml} systems. 
Compared to this body of research, FL \revminor{demands extra, often \emph{non-convex} constraints, like} imposing limits on the time and energy consumed by uplink transmissions.

In light of this concern, the primal-dual (PD) algorithm \revminor{is often} considered for such optimization under non-convex constraints in \revminor{a wide range of} wireless~\cite{eisen2019learning} and safe reinforcement learning~\cite{paternain2022safe} settings. 
\revmajor{
Drawing inspiration from this class of research, we propose to tackle the wireless FL-oriented power allocation problem with a learned PD approach through GCNs.  
Our proposed approach is rooted in a PD framework, inheriting its advantages such as the feasibility through penalty terms, the convexity of dual problems, and the iterative resolution of less complex subproblems.
\revminor{At the same time}, we harness the power of a data-driven approach by incorporating the topological nature of wireless network data in our neural architecture.
To the best of our knowledge, this is the first GCN-based power policy optimized by the PD method for FL in interference wireless networks. 
We provide theoretical guarantees for the proposed method and also demonstrate its state-of-the-art performance in multiple independent and identically distributed (i.i.d.) and non-i.i.d. scenarios.
Ultimately, optimizing transmit power can enhance the global FL model's performance, since there is less transmission noise, higher data availability, and enhanced communication robustness during the collaborative training process.}

\vspace{2mm}
\noindent {\bf Contributions.} 
The main contributions of this work are:
\begin{itemize}[topsep=0pt, wide=0pt]
    \item[i)] We formulate the power allocation problem under FL requirements as a maximization problem under non-convex conditional expectation constraints, to which we propose a solution based on a GCN-parameterized PD approach.\looseness=-1
    \item[ii)] We prove that our approach has a zero duality gap under mild conditions and universal parameterization.\looseness=-1
    \item[iii)] Through numerical experiments on simulated wireless channels and real-world datasets, we validate the effectiveness of the proposed FL power allocation approach in both i.i.d. and non-i.i.d. cases.\looseness=-1
\end{itemize}

\vspace{2mm}
\noindent {\bf Paper outline.} 
In Section~\ref{s:sys}, we present the system model and formulate the FL power allocation as a non-convex constrained optimization problem. 
In Section~\ref{s:alg}, we give an in-depth description of the proposed method along with theoretical proofs of its zero duality gap and of the parameterization suboptimality.
In Section~\ref{s:exp}, we discuss numerical experiments and show that the proposed method always outperforms heuristic and topology-agnostic power allocation benchmarks. 
Finally, we close this paper with conclusions in Section~\ref{s:end}.

\vspace{2mm}
\noindent {\bf Notation.} 
Sets and maps are typeset in calligraphic letters ($\ccalA$). 
Vectors are in bold lower case (\revminor{e.g.,} $\bba$) and matrices in bold upper case (\revminor{e.g.,} $\bbA$).
Alternative notations for sets $\{a_i\}^n_{i=1}{\,=\,}\{a_1, ..., a_n\}$ and vectors $[a_i]^n_{i=1}{\,=\,}[a_1, ..., a_n]$ may be present for clarification.
We use $\diag({\bbA})$ to denote a diagonal matrix storing diagonal elements of matrix ${\bbA}$. 
Additionally, ${\bbzero}$ and ${\bf 1}$ denote all-zeros and all-ones vectors of appropriate size, respectively.
When defining a function, its parameters may follow the input variables and be separated by a semicolon, e.g., $f(\bbA,\bbb;\bbw)$ denotes a function $f$ parameterized by $\bbw$.
For conditional statements, we define the indicator function $\mathds{1}(s){\,=\,}1$ if the condition $s$ is true, 0 otherwise.
\revmajor{To project a value $x$ onto a range $\ccalX$, we represent this operation as $\text{proj}_\ccalX(x)$. 
A simplified notation $[x]_+{\,=\,}\text{proj}_{\mbR_+}(x)$ is adopted for the positive part function, equivalent to $\max(x,0)$.}
When applied to vectors, scalar functions or operators are applied in an elementwise manner.
\revmajor{
Above, we have presented the general notation rules applied in this paper. 
For further references to important variables related to the physical processes under study and the problems formulated, please refer to Table~\ref{tab:notation} in Appendix~\ref{ap:notations}. 
We will explain additional symbols introduced during proofs and experiments inline as they come into play.
}

\section{System Model}\label{s:sys}
We formally introduce the FL system model and the interference-limited wireless channel model in Sections~\ref{ss:fl} and~\ref{ss:wl}, respectively. 
We provide a precise formulation of power allocation for FL as an optimization problem in Section~\ref{ss:prob}.

\vspace{-1em}
\subsection{Federated Learning over Wireless Networks}\label{ss:fl}

As depicted in Fig.~\ref{ff:flsys}, our wireless FL system consists of a single server and $L$ mobile worker devices, in which the server and workers communicate through wireless links.

\begin{figure}[t]
\centering
  \includegraphics[width=8cm,
    trim= 0cm 3.5cm 17cm .5cm,
    ]{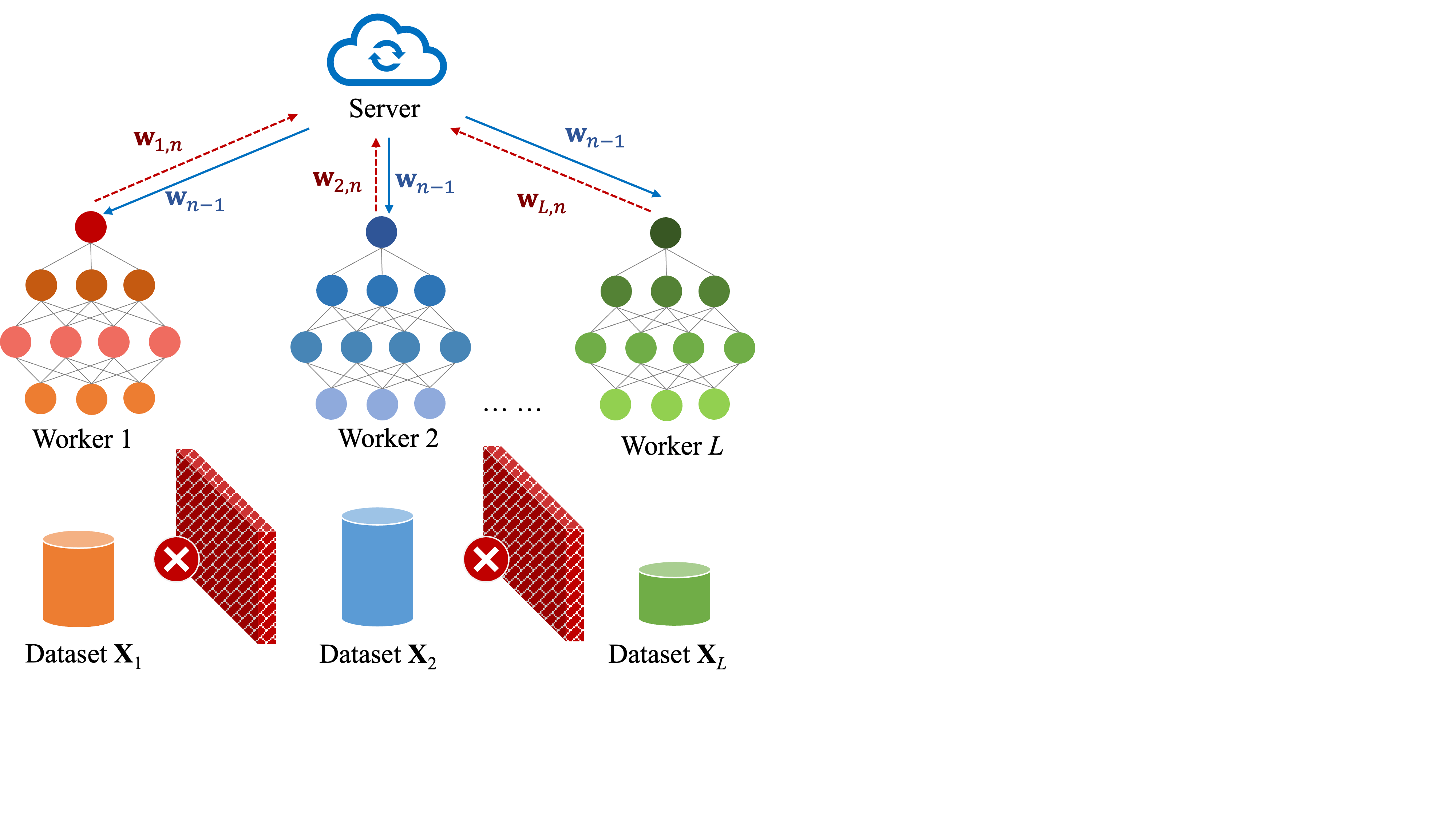}
  \captionof{figure}{
  A schematic view of the FL system model. 
  \revminor{Local datasets within the respective workers are not shared with other workers or the server, maintaining data privacy and security.}
  }  \label{ff:flsys}
  \vspace*{-1em}
\end{figure}

Each worker $i$ collects local data $\bbX_i{=}[\bbx_i^{(1)},...,\bbx_i^{(k_i)}]$,
where $k_i$ is the number of data samples in $i$. 
Subsequently, the total number of training data samples is $K{=}\sum^{L}_{i=1}{k_i}$ and the complete dataset $\ccalX{=}\{\bbX_1,...,\bbX_L\}$. 
We will assume that $\ccalX$ is i.i.d. across all workers unless explicitly stated otherwise.
Much like any other machine learning problems, FL takes the general form of minimizing an objective function but for all workers: $\min\limits_{\bbw}\sum_i^L f_i(\bbw)$, where $ f_i(\bbw){\,=\,}\ell(\bbX_i;\bbw)$ denotes a {\it local} loss, such as quadratic loss for regression tasks and cross-entropy for classification tasks.
Workers start each round of local optimization with the latest global model.
The $n^\text{th}$ FL iteration, for example, begins with the following assignment: $\bbw_{i,n}{\,=\,}\bbw_{n-1}$, where $\bbw_{i,n}$ denotes the model to be updated at worker $i$ and $\bbw_{n-1}$ the global model resulting from the last iteration.
\revminor{Following this,} each worker updates their parameters $\bbw_{i,n}$ based on local data and uploads these to the server, and the server aggregates those into a global one via $\bbw_n{\,=\,}\frac{1}{K}\sum_{i=1}^L k_i \bbw_{i,n}$, effectively ending the $n^\text{th}$ FL iteration.

The wireless setting adds additional complexities to the basic FL setup.
Since workers upload local models over wireless links that could be unreliable in realistic environments, the transmission may be corrupted. 
If no error-detecting codes, e.g., cyclic redundancy check (CRC) codes, are used to eliminate updates that are in error, erroneous transmissions may have a severe impact on FL performance because errors will be aggregated into the global model and contaminate all workers in the next iteration. 
Even with error detection, frequent transmission errors are still \revminor{undesirable} because we would have to abandon too many updates, which is at least a waste of resources.
Additionally, excessive transmission time from only a few workers can significantly slow down the global convergence of the entire system.

\vspace*{-1em}
\subsection{Multiuser Interference Network}\label{ss:wl}

We consider the uplink of a network with $L$ single-antenna mobile workers and an $n_R$-antenna central server or base station (BS); see Fig.~\ref{ff:wlsys}.

\begin{figure}[t]
\centering
  \includegraphics[width=5.cm,
    trim= 0 0 0 0cm
    ]{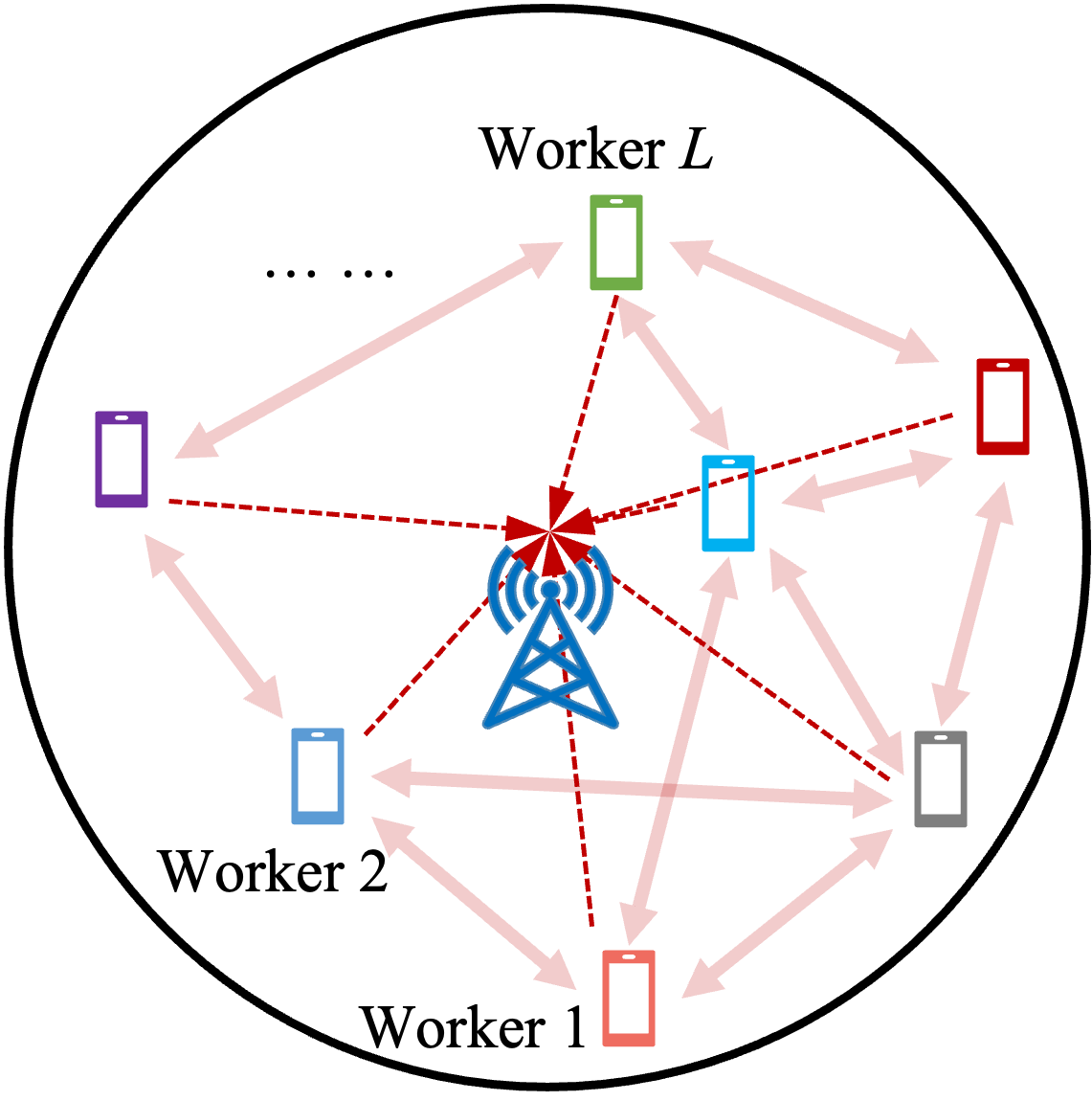}
  \captionof{figure}{
  A schematic view of the wireless system model. 
    Dashed lines indicate channels to the server. 
    Solid lines indicate interference channels between the users.
  }\label{ff:wlsys}
  \vspace*{-1em}
\end{figure}

Abstracting a signal of potentially variable length transmitted by worker $i$ as a complex scalar $s_{\text{t},i} {\,\in\,}\mbC$, the received signal at the BS is ${\bbs_\text{r}}{\,=\,}\sum^L_{i=1}{\bbh}_{i} \sqrt{p_i} s_{\text{t},i} + {\bbn_\text{c}}$, where ${\bbh}_{i} {\,\in\,}\mbC^{n_R}$ is the direct channel from $i$ to the BS, $p_i$ denotes the power that worker $i$ uses to transmit this signal, and ${\bbn_\text{c}}{\,\sim\,}\ccalN_\ccalC(\bbzero, \bbsigma^2)$ is the additive complex Gaussian noise at the receiver. 
We assume perfect channel information, i.e., ${\bbh}_{i}$ is known by both \revminor{the receiver and the transmitter.
When matched filters are applied at the receiver, the receive beamform vector for $i$ corresponds to $\bbh_i$.}
For a given vector of assigned transmit powers ${\bbp}{\,=\,}[p_1,...,p_L]$, the instantaneous signal-to-interference-plus-noise ratio (SINR)~\cite{Haenggi2009stochastic} of the link from worker $i$ to the BS is given by
\begingroup
\abovedisplayskip=2pt
\begin{equation}\label{e:sinr}
    \SINR_i{\,=\,}\frac{\alpha_i\,p_i}{1+\sum_{j\neq i}\beta_{i,j}\,p_j}, \quad\forall\,i{\,=\,}1,...,L,
\end{equation} 
\endgroup
where the channel gain and interference coefficients are computed as
\begingroup
\abovedisplayskip=2pt
\begin{equation}\label{e:csi}
\alpha_i=\frac{\|{\bbh}_{i}\|^2}{\sigma^2} \text{ and } \beta_{i,j}=\frac{|{\bbh}^H_{i} {\bbh}_{j}|^2}{\sigma^2\|{\bbh}_{i}\|^2}, \quad\forall\,i,j, \text{and }i{\,\neq\,}j.
\end{equation} 
\endgroup
We define the channel-state information (CSI) matrix  ${\bbH}{\,\in\,}\mbR^{L{\times}L}$ with $\alpha_i$ as its diagonal entries and $\beta_{i,j}$ as off-diagonal entries, i.e., $H_{i,i}{\,=\,}\alpha_i$ and $H_{i,j}{\,=\,}\beta_{i,j}$ for $i{\neq\,}j$. 
Notice that by introducing the CSI matrix, the system can now be interpreted as a directed graph with workers as its nodes and $\bbH$ as its adjacency matrix. 

The achievable data rate for worker $i$ with bandwidth $B$ is $R_i(\bbp,\bbH){\,=\,}B\log(1{+}\SINR_i)$, and its transmission time is
\begin{equation}\label{e:tau}
    \tau_i\!\left(\bbp,\bbH\right){\,=\,}\frac{Z(\bbw_i)}{R_i}, \,\,\, 
\forall\,i,
\end{equation}
where $Z(\bbw_i)$ represents the size of the transmitted local model parameters $\bbw_i$ in bits, \revminor{plus supplementary CRC bits. 
Owing to the fixed model architecture employed, $Z(\bbw_i)$ remains constant across all workers.
}

Other worker-level constants include $P_{\text{c},i}$, namely the static power consumption \revminor{by worker $i$} during the term of transmission, and the computation power consumption \revminor{coefficient $c_{\text{comp},i}$}, which is 
\revminor{the energy required for computing per bit data, depending on hardware components like the central processing unit (CPU) and microchips.
}
Hence, the total energy consumed by the local training and transmission processes is
\begin{equation}\label{e:e}
    e_{\tot,i}(\bbp,\bbH){\,=\,}c_{\text{comp},i}Z(\bbw_i) + (p_i + P_{\text{c},i}) \tau_i, \,\,\, \forall\,i.
\end{equation}
For simplicity, we proceed with uniform constants across all workers, i.e., $c_{\text{comp},i}{\,=\,}c_{\text{comp}}$ and $P_{\text{c},i}{\,=\,}P_{\text{c}}$, $\forall\,i$, and note that the non-uniform case follows exactly the same approach. 
We assume single-packet transmissions and define the packet error rate (PER)
\begingroup
\abovedisplayskip=2pt
\begin{equation}\label{e:per}
    \PER_i(\bbp,\bbH) = 1 - \exp\!\left(-\frac{m}{\SINR_i}\right), \forall\,i,
\end{equation}
\endgroup
with $m$ being a waterfall threshold~\cite{chen2020joint,xi2011general}. 
CRC is leveraged when a packet containing $\bbw_i$ arrives at the BS.
The probability of the transmission being error-free is $1{\,-\,}\PER_i$, namely the packet success rate (PSR).
We define a system-level performance metric $g(\cdot)$ as the {\it weighted sum} of the probabilities of successful transmissions
\begingroup
\abovedisplayskip=2pt
\begin{equation}\label{e:wsq}
    g\!\left(\left[\,\PSR_i\,\right]^L_{i=1}\right){\,=\,}\sum_{i=1}^L \omega_i\,\PSR_i,
\end{equation}
\endgroup
with $\omega_i$ being a scalar weight assigned to worker $i$ accounting for its data quality and/or other factors such as worker reliability or preference.
A basic and common choice under i.i.d. data distribution is to specify $\omega_i{\,=\,}k_i/K$, using the data quantity as an indicator of the data quality.  

\vspace*{-1em}
\section{Problem formulation}\label{ss:prob}
Our goal is to find an instantaneous power allocation policy $p{\,:\,}\mbR^{L\times L}{\,\rightarrow\,}\mbR^L$ to compute the transmit power magnitudes $\bbp{\,=\,}p(\bbH)$ for any $\bbH{\,\in\,}\ccalH$, where $\ccalH$ denotes a continuous distribution from which varying CSI instances are drawn.
Before each uplink transmission, the system should determine for each worker its transmit power by feeding the power policy $p$ with the real-time $\bbH$.
After a sufficient number of FL iterations, \revminor{the global model converges to an optimal point of the global loss function while satisfying the constraints imposed by our physical system. 
In cases of non-convex optimization, this point of convergence may correspond to a stationary (not necessarily optimal) point.}
Notice that only the correctly transmitted local models will be aggregated by the server each round.
{Hence, the update of the global model at the server will be a function of the channel and the power allocation.
Without being specific to any certain iteration, it can be expressed as}
\begingroup
\abovedisplayskip=2pt
\begin{equation}\label{e:agg}
\bbw(\bbp, \bbH){\,=\,}\frac{\sum_{i=1}^L k_i \bbw_i S_i(\bbp,\bbH) }{ \sum_{i=1}^L k_i S_i(\bbp,\bbH)},
\end{equation}
\endgroup
where $S_i(\bbp,\bbH)$ is an indicator function denoting whether the server actually aggregates the local model from worker $i$.
Only if the allocated $p_i\,{>}\,0$ (i.e., worker $i$ performs a transmission) and the server-received packet passes the CRC (i.e., the transmission is error-free), we deem the transmission successful and let $S_i(\bbp,\bbH){\,=\,}1$; otherwise, $S_i(\bbp,\bbH){\,=\,}0$.
\revmajor{
In the rare case where no model is correctly transmitted in a certain round, the server retains an unchanged global copy of $\bbw$ from the previous round, and the local models are then rolled back to this global $\bbw$.
}

Under the i.i.d. data assumption, the global FL model will benefit from accessing as much local information as possible.
Intuitively speaking, this is equivalent to prioritizing transmissions from workers with better links if all workers have an even number of samples, 
or prioritizing transmissions from workers with more data if all workers are in the same wireless condition. 
Hence, we adopt the weighted sum of PSR~\eqref{e:wsq} as a proxy for the quality of the global FL model.
We now formally state our main problem.

\begin{problem}\label{P:main}
Determine the optimal power allocation policy $p^\star: \mbR^{L\times L}{\,\rightarrow\,}\mbR^L$ that solves the following optimization problem
\begingroup
\abovedisplayskip=2pt
\belowdisplayskip=0pt
\begin{subequations*}
    \begin{alignat*}{3}
        p^\star{\,=\,}& \argmax_{p}\,\,  g\left( \EH\left[\PSR(\bbp,\bbH)\right] \right), \label{e:p1}\tag{P1}\\
        \text{s.t.} \quad  
        & r_{0,i} \leq \EH\left[R_i(\bbp,\bbH) \, | \, p_i > 0\right], \label{e:p1:a}\tag{\ref*{e:p1}a}\\
        &{e_{0,i} \leq \EH\left[\frac{R_i(\bbp,\bbH)}{p_i+P_{\text{c},i}} \, | \, p_i > 0\right], \forall i,} \label{e:p1:b}\tag{\ref*{e:p1}b}\\
         & \bbp = p(\bbH) {\,\in\,}[0,P_{\max}], \,\,\, \forall \bbH, \label{e:p1:c}\tag{\ref*{e:p1}c}
    \end{alignat*}
\end{subequations*}
\endgroup
\end{problem}
\noindent where $r_{0,i}$ and $e_{0,i}$ are prespecified minimum levels of desired expected data rate and energy efficiency, respectively, for worker $i$.
The expectation $\EH[\cdot]$ over the distribution $\ccalH$ (or simply $\mbE[\cdot]$ in the remainder of this paper) is applied elementwise to the variable in the bracket. 
Note that, instead of an upper constraint on energy consumption, we place a lower constraint on a variant of its inverse metric in the inequality~\eqref{e:p1:b} where the left-hand side is converted from~\eqref{e:e} such that\looseness=-1
\begin{equation*}
e_{0,i}{\,=\,}\!\left(\frac{e_{\tot_{\,0,i}}}{Z(\bbw_i)}-c_{\text{comp}}\right)^{-1}, \forall\, i{\,=\,}1,...,L,
\end{equation*}
which reflects the energy efficiency of worker $i$'s transmission.
Hence, higher values in $\bbe_{0}$ are generally preferred in an energy-constrained system, indicating more efficient use of energy during wireless transmissions.

To better understand \eqref{e:p1}, notice that we are trying to maximize $g(\cdot)$ [cf.~\eqref{e:wsq}] applied to the expected probabilities of successful transmissions given a distribution $\ccalH$ from which the CSI matrix instances are drawn. 
We seek to maximize this objective subject to three natural constraints.
In~\eqref{e:p1:a}, we require the data rate at each worker that transmits, i.e., workers where $p{\,>\,}0$, to exceed (in expectation) a prespecified rate $r_{0,i}$. 
In light of expression~\eqref{e:tau}, this requirement can be reinterpreted as demanding a transmission duration shorter than the maximum allowable time for all transmitters with non-zero allocated power.
\revminor{At the same time}, in~\eqref{e:p1:b}, the energy efficiency at each transmitting worker is required to be at least $e_{0,i}$.
Of practical importance, this constraint ensures that the total energy consumed by the wireless transmission is within a reasonable bound. 
Moreover, in~\eqref{e:p1:c}, we guarantee that the instantaneous allocated power $p(\bbH)$ for every worker does not exceed an upper bound $P_{\mathrm{max}}$.

\revmajor{On a side note, during a transmission round, the data rates may not remain constant, for early finishers may reduce interference in the network.
Nonetheless, our solution remains feasible, as the completion of transmissions by some workers can only improve the rates of the rest and does not violate any constraints.
}
It should be observed that optimally solving~\eqref{e:p1} is particularly challenging.
Apart from having a non-convex objective [cf.~\eqref{e:sinr} and~\eqref{e:per}] there are at least two additional sources of complexity:
i)~We are optimizing over the space of power allocation functions, which is inherently infinite-dimensional, and ii)~We are not given a specific channel or sets of channels but rather we have to consider expected values over a whole distribution of channels.
We explain how our proposed solution addresses these challenges in the next section.\looseness=-1

\vspace*{-1em}
\revminor{
\section{Methods}\label{s:alg}
}

Our method (termed \underline{p}rimal-\underline{d}ual \underline{g}raph convolutional power network, or PDG, see Fig.~\ref{ff:main}) parameterizes the space of power allocation functions through GCNs (Section~\ref{ss:param_gcn}) and relies on a PD restatement of the original problem (Section~\ref{ss:pd}). 
In Section~\ref{ss:opt}, we prove that~\eqref{e:p1} has a zero duality gap, and we show that the optimality of our solution solely depends on the expressive power of the parameterization.

\begin{figure*}[t]
\centering
    \includegraphics[width=17.5cm,
    trim=0 7cm 0 0cm, 
    ]{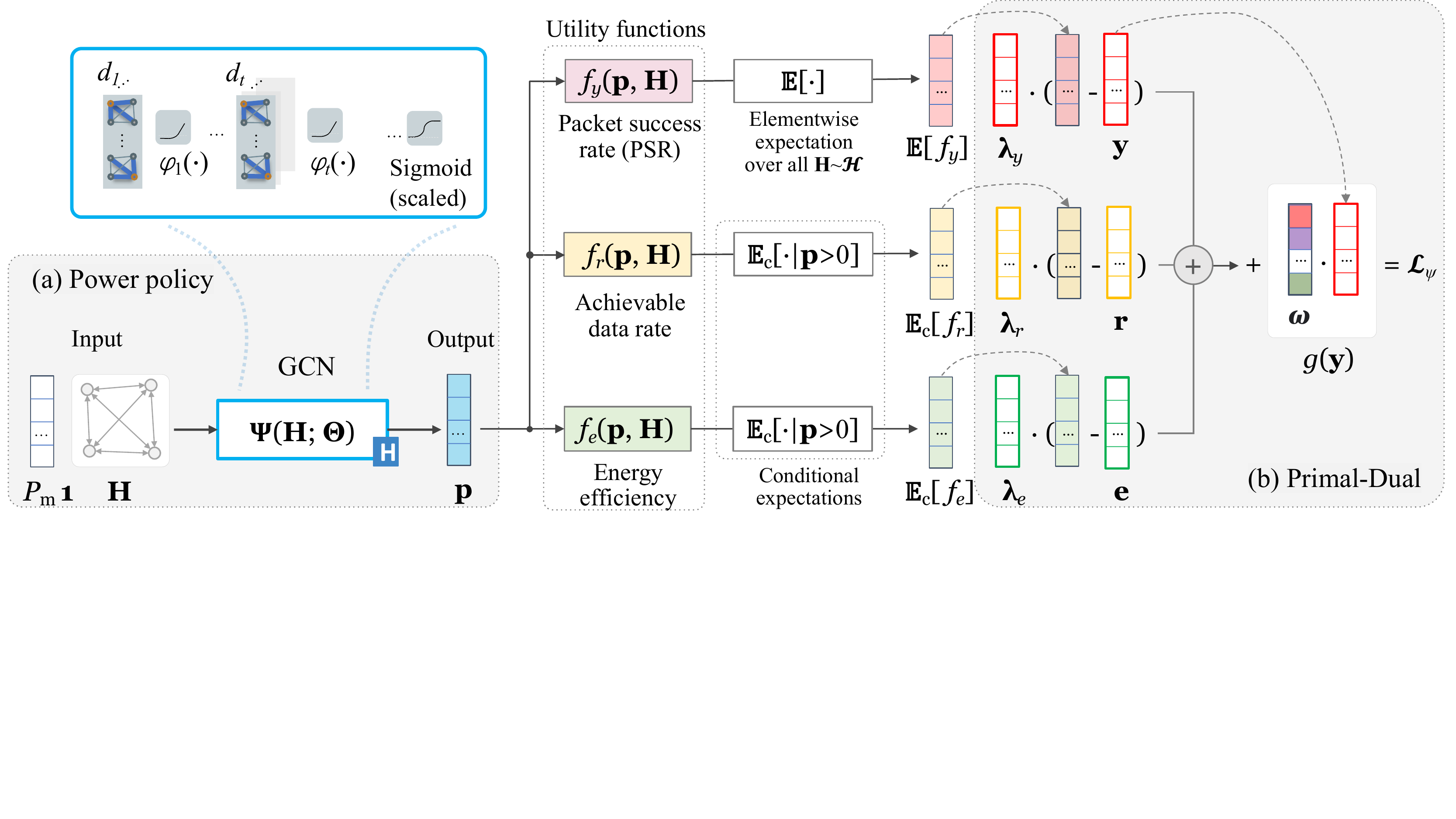}
    \caption{
    Illustration of the PDG architecture.
    It consists of a power policy block~(a) and a primal-dual (PD) learning block~(b). 
    }
    \label{ff:main}
    \vspace*{-1em}
\end{figure*}

\vspace{.3em}
\subsection{The Proposed Architecture}\label{ss:ach_pdg}
\vspace{-.3em}
Let us begin with a detailed explanation of Fig.~\ref{ff:main}. 
The PDG architecture is composed of two main components: 
(a) a power policy block denoted as (a) in Fig.~\ref{ff:main} and a PD learning block (b). 
Notationally, $d_1$ and $d_t$ represent the last dimension of the trainable parameter matrices within the corresponding GCN layers.
Between these layers, the elementwise activation function $\varphi_t$ is applied, introducing non-linearities to the network's computations.

Continuing, PDG derives three key quantities: 
the expected Partial Signal-to-Noise Ratio (PSR) values, denoted as $\mathbb{E}[f_y]$, 
the conditionally expected achievable rates, denoted as $\mathbb{E}_c[f_r]$, 
and the conditionally expected energy efficiency, denoted as $\mathbb{E}_c[f_e]$. 
These quantities are obtained through specific utility functions, which take as their inputs both the network topology $\bbH$ and the output $\mathbb{p}$ obtained at the conclusion of block~(a). 
Importantly, these expected values correspond to the right-hand-side of the constraints outlined in equations~\eqref{e:p2:a},~\eqref{e:p2:b}, and~\eqref{e:p2:c}, respectively.
Augmented by primal variables ($\bby$, $\bbr$, and $\bbe$) and dual variables ($\bblambda_y$, $\bblambda_r$, and $\bblambda_e$), PDG formulates a parameterized Lagrangian $\ccalL_\psi$, as expressed in~\eqref{e:lag}. 
Additionally, this Lagrangian incorporates $g(\bby)$, which represents a weighted sum of $\bby$, with weights determined by the predefined worker weights $\bbomega$.

The training process essentially completes the entire computational flow, progressing from block~(a) through block~(b) in a forward pass. 
Upon the completion of training, only the power policy is of interest for computing the allocated power vector $\bbp$, given any unseen $\bbH$ instances.

\vspace*{-1em}
\subsection{Parameterizing Power Policies using GCNs}\label{ss:param_gcn}

Using a deep graph neural architecture, we parameterize a subset of all possible power allocation policies $p$.
By doing so, we render the maximization in~\eqref{e:p1} tractable by maximizing over the parameters of the neural architecture instead. 
To be more precise, we define 
\begin{equation}\label{e:ppsi}
    \revmajor{p_{\psi}(\bbH){\,=\,}\Psi(\bbH;\bbTheta), }
\end{equation}
where $\Psi$ is a \mbox{$T$-layer} GCN with trainable parameters $\bbTheta$.
The input to $\Psi(\bbH;\bbTheta)$ is $\bbZ^{(0)}{\,=\,}P_{\mathrm{max}} \mathbf{1}$ and the computation in the $t^\text{th}$ layer of the GCN is given by
\begin{equation}\label{e:gcn}
	\bbZ^{(t)}{\,=\,}{\varphi_t}{\left(\hat{\bbH}\bbZ^{(t-1)}\bm{\Theta}^{(t)}\right)}, \forall\,t=1,...,T,
\end{equation}
where $\hat{\bbH}{\,=\,}\bbD^{-\frac{1}{2}}\bbH\bbD^{-\frac{1}{2}}$ is the normalized adjacency matrix with degree matrix $\bbD{\,=\,}\diag(\bbH \mathbf{1})$.
In~\eqref{e:gcn}, ${\bbTheta}^{(t)} {\,\in\,}\mathbb{R}^{d_{t-1} \times d_{t}}$ contains trainable parameters where
$d_{t}$ is the output dimension of layer $t$ with $d_0{\,=\,}d_T{\,=\,}1$, 
and $\varphi_t(\cdot)$ is an elementwise activation function. 
In Section~\ref{s:exp}, we detail the specifics of $T$, $d_t$, $\varphi_t$, and other hyperparameters used in our experiments.

In principle, one may choose any architecture as the parameterization of the power policy functional space.
Our particular choice of GCN in~\eqref{e:gcn} not only achieves good performance in practice but is also driven by the following three appealing characteristics.

\vspace{2mm}
\noindent
\textbf{Permutation equivariance.} 
GCNs are composed of graph filters, which are similar to convolutional filters in classical convolutional neural networks (CNNs)~\cite{segarra2017optimal}.
These graph filters learn to perform appropriate local aggregations through trainable parameters.
Similar to CNNs being equivariant to translations~\cite{worrall2017harmonic}, GCNs are permutation equivariant~\cite{roddenberry2022local}, meaning that the output is not affected by the ordering of nodes. 
Our parameterization entails a permutation equivariant power allocation policy, such that the allocated power is not affected by the indexing of workers, which is a key feature since this indexing is arbitrary.

\vspace{2mm}
\noindent
\textbf{Flexibility to varying number of workers.} 
In FL, the possible termination of worker participation inevitably causes variations in the communication network size.
Such size changes can occur within a short time, which is not enough to retrain the power allocation model.  
This requires the power policy to accommodate, without retraining, a number of workers $L'$ different from the $L$ workers with which it was trained.
It thus undermines the usage of any classical deep models that require fixed input and output dimensions. 
It also facilitates large-network training by iteratively training on sampled subsets.
Our PDG possesses this distinguishing capability by relying on GCNs [cf.~\eqref{e:gcn}], a critical point to be numerically showcased in Section~\ref{s:exp}.

\vspace{2mm}
\noindent
\revminor{
\textbf{Extensibility to heterogeneous systems.} 
The proposed architecture exhibits easy extensibility to systems of heterogeneous devices and traffic types. 
For instance, by modeling the network as a multigraph, we could encode different wireless technologies or protocols as different edge types.
In this case, the power policy model may be represented using relational graph convolutional networks (R-GCNs)~\cite{schlichtkrull2018modeling} to enable effective management in multigraphs. 
Moreover, cases where there are different types of workers in the network can be accommodated via heterogeneous GNNs (HGNNs)~\cite{zhao2021heterogeneous} or graph attention networks (GATs)~\cite{velickovic2017graph}, which are particularly capable of handling different types of nodes. 
}

\vspace*{-1em}
\subsection{Primal-Dual Augmented Constrained Learning}\label{ss:pd}
Having introduced the GCN parameterization of the power policy \revminor{$p_{\psi}$ in~\eqref{e:ppsi}}, we proceed to restate~\eqref{e:p1} in a manner that is amenable to a PD solution
\begingroup
\allowdisplaybreaks
\begin{subequations*}
    \begin{alignat*}{3}
        P_\psi^\star &= \max\limits_{\bm{\Theta},\bby,\bbr,\bbe}\quad  g(\bby), \label{e:p2}\tag{P2}\\
        &\text{s.t.} \quad  \bby \leq \mbE{\left[\,\PSR(\bbp_{\psi},\bbH)\,\right]}, \label{e:p2:a}\tag{\ref*{e:p2}a}\\
         & r_i \leq \mbE{\left[R_i(\bbp_{\psi},\bbH) \, | \, {p_{\psi}}_i \!> \!0\right]}, \label{e:p2:b}\tag{\ref*{e:p2}b}\\
         & {e_i \leq \mbE{\left[\,\frac{R_i(\bbp_{\psi},\bbH)}{{p_{\psi}}_i + P_{\text{c}}} \, \Big| \, {p_{\psi}}_i \!> \!0\,\right]},} \label{e:p2:c}\tag{\ref*{e:p2}c}\\
         & r_i {\,\in\,}[r_{0,i},+\infty), \,\,  \label{e:p2:d}\tag{\ref*{e:p2}d}\\
         & {e_i {\,\in\,}[e_{0,i},+\infty), \,\, \forall\,i,} \label{e:p2:e}\tag{\ref*{e:p2}e}\\
         & \revminor{\bbp_{\psi}{\,=\,}p_\psi(\bbH)}{\,\in\,}[0,P_{\max}], \,\,\, \forall\,\bbH. \label{e:p2:f}\tag{\ref*{e:p2}f}
    \end{alignat*}
\end{subequations*}
\endgroup

\revmajor{
To bring the formulation closer to
a standard PD form, we introduce~\eqref{e:p2:a} with auxiliary primal variables $\bby$, $\bbr$, and $\bbe$, which will also aid in the iterative process.
}
To treat the remaining constraints, we formulate the Lagrangian dual problem by introducing non-negative multipliers $\bblambda_y{\,\in\,}\mbR_{+}^{L}$, $\bblambda_r{\,\in\,}\mbR_{+}^{L}$, and $\bblambda_e{\,\in\,}\mbR_{+}^{L}$ associated with constraints~\eqref{e:p2:a},~\eqref{e:p2:b}, {and~\eqref{e:p2:c},} respectively. 
To simplify the notation, let $\mbE[f_y]$ denote the expectation {term and $\Ec[f_r]$ and $\Ec[f_e]$ the conditional expectation terms} in constraints~\eqref{e:p2:a},~\eqref{e:p2:b}, and~\eqref{e:p2:c}, respectively.
\revmajor{
We enforce~\eqref{e:p2:f} simply by ensuring that the image of the last non-linearity in our GCN is contained in $[0,P_{\max}]$.
}
Moreover, we denote the feasible region in~\eqref{e:p2:d} as $ \ccalR{\,:=\,}[\bbr_0,+\infty)$ and that in~\eqref{e:p2:e} as $ \ccalE{\,:=\,}[\bbe_0,+\infty)$.
The Lagrangian of~\eqref{e:p2} is then given by
\begin{equation}\label{e:lag}
    \begin{aligned}
        \ccalL_{\psi}(\bbTheta,\bby,\bbr,\bbe,&\bblambda_y,\bblambda_r,\bblambda_e)= g(\bby) + \bblambda_y^{\top} (\mbE[f_y]-\bby) \\
        &+ \bblambda_r^{\top} (\Ec[f_r]-\bbr)
        + \bblambda_e^{\top} (\Ec[f_e]-\bbe).
    \end{aligned}
\end{equation}
Following standard PD solutions of optimization problems~\cite{boyd2004convex}, this Lagrangian motivates the following iterative updating approach:
in the $n^\text{th}$ iteration, the learnable parameters are updated with step size $\gamma_{\Theta,n}$ by
\begin{subequations}
    \begin{alignat}{3}
        \bbTheta_{n+1}{\,=\,}\,\bbTheta_n + \gamma_{\Theta,n}(\nabla_{\Theta}\mbE[f_y]\bblambda_{y,n} 
        &+ \nabla_{\Theta}\Ec[f_r]\bblambda_{r,n} \nonumber\\ 
        &+ { \nabla_{\Theta}\Ec[f_e]\bblambda_{e,n} }).  \label{e:lag:1} 
    \end{alignat}    
    For the primal variables, their updates follow 
    \begingroup
    \begin{alignat}{3}        
        \bby_{n+1}{\,=\,}&\,\bby_{n} + \gamma_{y,n}(\nabla_y g(\bby) - \bblambda_{y,n}
        ), 
        \label{e:lag:2}\\
        \bbr_{n+1}{\,=\,}&\,\mathrm{proj}_{\ccalR}\left( \bbr_n + \gamma_{r,n} (-\bblambda_{r,n}) \right), \label{e:lag:3}\\
        \bbe_{n+1}{\,=\,}&\,{\mathrm{proj}_{\ccalE}\left( \bbe_n + \gamma_{e,n} (-\bblambda_{e,n}) \right),} \label{e:lag:4}
    \end{alignat}
    \endgroup
    where the terms $\gamma_{*,n}$ represent step sizes for updating the corresponding primal variables in the current iteration.
    Finally, we update the dual variables
    \begin{alignat}{3}
        \bbLambda_{n+1}{\,=\,}&\left[\bbLambda_{n} - \bbgamma_{\lambda,n}^{\top} (\bbE - \bbX_{n+1})\right]_{+}
        \label{e:lag:7}
    \end{alignat}
    with the vector $\bbgamma_{\lambda,n}{\,=\,}[\gamma_{y,n}\,\,\gamma_{r,n}\,\,\gamma_{e,n}]^\top$ denoting the step sizes of the corresponding dual variables in the $n^\text{th}$ iteration, where
    \begingroup
    \abovedisplayskip=2pt
    \begin{equation*}
        \bbLambda_n{\,=\!}\left[\begin{matrix}\bblambda_{y,n}\\ \bblambda_{r,n}\\ \bblambda_{e,n}\end{matrix}\right], \bbE{\,=\!}\left[\begin{matrix}\mbE[f_y]\\ \Ec[f_r]\\ \Ec[f_e]\end{matrix}\right], \text{and } \bbX_n{\,=\!}\left[\begin{matrix}\bby_n\\ \bbr_n\\ \bbe_n\end{matrix}\right]
    \end{equation*}
    \endgroup
    denote the matrix representation of the dual, expectation, and primal variables, respectively.
    The standard PD solution repeats iteratively until convergence. 
\end{subequations}

In our implementation, we approximate the expected values in~\eqref{e:lag:1} and~\eqref{e:lag:7} through their empirical counterparts by drawing batches of CSI instances from $\ccalH$.
This above process can be implemented to learn the parameters $\bbTheta$ that maximize the objective in~\eqref{e:p2}.

\vspace{-.6em}
\subsection{Optimality Analysis}\label{ss:opt}
\vspace{-1mm}

In this section, we first show that~\eqref{e:p1} has a zero duality gap, thus theoretically motivating the use of a PD approach as described in Section~\ref{ss:pd}.
Then, we show that the optimality gap in~\eqref{e:p2} depends linearly on the approximation capability of the parameterization of the policy function.\looseness=-1

We show the first result on a generalized version of~\eqref{e:p1}, stated as follows
\begin{subequations*}
    \begin{alignat*}{3}
        &\mathit{P}^\star = \max\limits_{p}\,\, v(p), \label{e:p1g}\tag{P1$^\prime$}
        \quad\text{s.t.}\,\,
        &u_i(p)\geq c_i, \forall\,i,\quad
        &p\in \ccalP,  
    \end{alignat*}
\end{subequations*}
with $v(p){\,=\,}g(\mbE[f_0(\bbp,\bbH)])$ for some objective function $f_0{\,:\,}\mbR^L{\times} \mbR^{L{\times}L}{\,\rightarrow\,}\mbR^L$ (e.g., PSR) and a utility function $g{\,:\,}\mbR^L{\,\rightarrow\,}\mbR$ (e.g., the weighted sum).
The constraints are described by 
$u_i(p){\,=\,}\mbE[{\fc}_i(\bbp,\bbH)\,|\,p_i{\,>\,}0]$ as the expectation of constraint function(s) ${\fc}_i{\,:\,}\mbR^L{\times}\mbR^{L{\times}L}{\,\rightarrow\,}\mbR$ (e.g., data rate or energy efficiency) conditioned on transmitting workers and $c_i{\,\geq\,}0$ as the constraint value for worker $i$.
For simplicity, we will show our result for the case of a single constraint function ${\fc}_i$ for each worker, but the same result holds for any finite number of constraint functions (e.g., two in the case of~\eqref{e:p1}).
The set $\ccalP$ describes a feasible policy function space as, e.g., the one stated in~\eqref{e:p1:c}. 
Notice that~\eqref{e:p1g} is a more general formulation than our problem of interest~\eqref{e:p1}.

To establish our results, we rely on several assumptions that are common in literature on similar topics~\cite{eisen2019learning}:

\vspace{2mm}
\noindent (AS1)\label{as:1} {The probability distribution $\ccalH$ is non-atomic.}

\vspace{2mm}
\noindent (AS2)\label{as:2} {The utility function $g(\cdot)$ is concave and non-decreasing.}

\vspace{2mm}
\noindent (AS3)\label{as:3} Denoting by $p^\star$ an optimal solution to~\eqref{e:p1g}, the set $\ccalP$ is given by
\begin{align}\label{e:calP}
    \ccalP = \{ p \, | & \, p(\bbH) {\,\in\,}[0,P_{\max}], \\
    & \mathds{1} (p_i(\bbH) = 0) = \mathds{1} (p^\star_i(\bbH) = 0) \,\,\, \forall i,\bbH \}.\nonumber
\end{align}
\noindent Most of these assumptions are naturally satisfied by our problem at hand.
In particular, \hyperref[as:1]{(AS1)} is satisfied for channel models that depend on continuous probability distributions (e.g., the Rayleigh fading channels) from which we then compute the CSI matrix via~\eqref{e:csi}. 
Regarding \hyperref[as:2]{(AS2)}, our weighted sum utility function on PSR values in~\eqref{e:wsq} evidently satisfies this assumption. 
In terms of \hyperref[as:3]{(AS3)}, the maximum power constraint $P_{\max}$ on all feasible policies is also a natural requirement.
In the second part of~\eqref{e:calP}, we require knowledge of the optimal worker selection $\mathds{1} (p^\star_i(\bbH) = 0)$.
In other words, we do not know the optimal power allocation (since that is what we are optimizing for), but we assume knowledge of whether this optimal power is either zero or non-zero for every worker $i$ and channel $\bbH$.
Admittedly, this is a stringent assumption that will be leveraged in showing our theoretical results.
Nevertheless, notice that in our experiments (Section~\ref{s:exp}), we do not assume access to this knowledge and the proposed method still performs satisfactorily in practice.

Under the introduced assumptions, the following result holds.
\begin{theorem}\label{T:1}
If problem~\eqref{e:p1g} is feasible, then, under assumptions \hyperref[as:1]{(AS1)}-\hyperref[as:3]{(AS3)}, it has a zero duality gap.\looseness=-1
\end{theorem}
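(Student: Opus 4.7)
The plan is to establish the zero duality gap by showing that the \emph{perturbation function} associated with~\eqref{e:p1g} is concave, which is a classical sufficient condition for strong duality~\cite{boyd2004convex}. The key technical leverage is Lyapunov's convexity theorem, which allows us to bypass the non-convexity of the underlying functions $f_0$ and ${\fc}_i$ by exploiting the non-atomicity of $\ccalH$ guaranteed by (AS1).

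First, I would define the perturbation function $P(\bbxi){\,=\,}\sup_{p\in\ccalP}\{v(p)\,:\,u_i(p){\,\geq\,}c_i{+}\xi_i,\,\forall\,i\}$ for $\bbxi{\,\in\,}\mbR^L$, so that $P(\bbzero){\,=\,}\mathit{P}^\star$. By standard convex analysis, if $P$ is concave and finite at the origin, the dual problem attains the primal optimum, yielding the desired zero duality gap.

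Second, I would establish concavity of $P$. Fix $\bbxi_1,\bbxi_2$ in the effective domain of $P$ with (near-)optimal policies $p_1,p_2{\,\in\,}\ccalP$, and pick $\theta{\,\in\,}(0,1)$. The goal is to construct a policy $p_\theta{\,\in\,}\ccalP$ whose expected objective and conditional-expected constraint values majorize the $\theta$-convex combinations of those of $p_1$ and $p_2$. The construction relies on time-sharing across channel realizations: partition the support of $\ccalH$ into two measurable subsets of probabilities $\theta$ and $1{-}\theta$ (possible by (AS1)), and set $p_\theta$ equal to $p_1$ on the first and $p_2$ on the second. Non-atomicity guarantees via Lyapunov's theorem that the set of achievable expectation vectors over $\ccalP$ is convex, so the expectations under $p_\theta$ realize exactly the desired convex combinations. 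Combined with the concavity and monotonicity of $g$ from (AS2) via Jensen's inequality, this yields $v(p_\theta){\,\geq\,}\theta v(p_1){+}(1{-}\theta)v(p_2)$ and a matching bound for the constraint functionals, thereby proving concavity of $P$.

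The main obstacle lies in handling the \emph{conditional} expectations defining $u_i$, since the conditioning event $\{p_i{\,>\,}0\}$ generally differs across candidate policies and interacts delicately with the time-sharing construction. This is precisely where (AS3) enters: by restricting $\ccalP$ to policies sharing the same zero-support pattern as $p^\star$, the conditioning events for $p_1$, $p_2$, and $p_\theta$ align almost surely, so the conditional expectations under $p_\theta$ reduce cleanly to the corresponding convex combinations. Once concavity of $P$ is secured, the zero duality gap follows from standard strong-duality arguments, completing the proof.
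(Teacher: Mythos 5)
Your proposal is correct and follows essentially the same route as the paper: the set $\ccalC$ that the paper convexifies is precisely the hypograph of your perturbation function $P(\bbxi)$, its convexity is established by the identical Lyapunov/time-sharing construction with (AS3) aligning the conditioning events and (AS2) handling $g$, and the paper's supporting-hyperplane step at $(\mathit{P}^\star,0,\dots,0)$ is exactly the standard proof of the ``concave perturbation function $\Rightarrow$ zero gap'' fact you invoke. The only caveat is that both you and the paper implicitly rely on the supporting hyperplane being non-vertical (the paper's $a_0>0$, your finite supergradient at the origin), which the paper disposes of by citation rather than by argument.
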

\begin{proof}
Let us define the following set 
$$\ccalC=\{\bbxi\in\mbR^{L+1}\,|\,\exists\,p\in \ccalP, v(p)\geq\xi_0, u_i(p)\geq\xi_i+c_i, \forall\,i\}.$$
{The feasibility of~\eqref{e:p1g} guarantees that $\ccalC$ is not empty since $(\mathit{P}^\star, 0, ..., 0){\,\in\,}\ccalC$, where $\mathit{P}^\star$ is the maximum objective achievable by that problem.}
\begin{lemma}\label{L:1}
$\ccalC$ is a convex set. (See proof in Appendix~\ref{ap:a}.)
\end{lemma}
Let us define the Lagrangian of~\eqref{e:p1g} as
\begin{equation*}
    \ccalL(p,\bblambda) = v(p) + \sum\limits^{L}_{i=1}\lambda_i(u_i(p)-c_i), \text{for }\lambda_i\in\mbR_{+},
\end{equation*}
and hence the dual value is given by 
\begin{equation*}\label{e:dv}
    D^\star = \min\limits_{\bblambda\in\mbR_{+}^{L}}\, \max\limits_{p\in\ccalP}\, \ccalL(p,\bblambda).
\end{equation*}
In showing that $D^\star{\,=\,}\mathit{P}^\star$, we \revminor{already have} $D^\star{\,\geq\,}\mathit{P}^\star$ due to weak duality. 
What is left here is to show that $D^\star{\,\leq\,}\mathit{P}^\star$, which can be achieved by finding some $\tbblambda{\,\in\,}\mbR_{+}^{L}$ such that
\begin{equation*}
    \mathit{P}^\star \geq \max\limits_{p\in\ccalP}\,\ccalL(p,\tbblambda).
\end{equation*}
Let $\bbxi^\star{\,=\,}(\mathit{P}^\star,0,...,0)$.
Notice that $\bbxi^\star{\,\in\,}\ccalC$ but $\bbxi^\star{\,\notin\,}\ccalC^\mathrm{o}$, the interior of $\ccalC$.
The latter can be proved by contradiction:
Suppose $\bbxi^\star{\,\in\,}\ccalC^\mathrm{o}$, we could find a ball centered at $\bbxi^\star$ still in $\ccalC$, i.e., $\exists\,\epsilon{\,>\,}0$ small enough such that $\bbxi^\prime{\,=\,}\bbxi^\star{+}\epsilon$ and $\bbxi^\prime{\,\in\,}\ccalC$.
Recalling the definition of $\ccalC$, a contradiction would be raised by $v(p)\geq\xi^\prime_0{\,=\,}\mathit{P}^\star{+}\epsilon{\,>\,}\mathit{P}^\star$ violating the optimality of $\mathit{P}^\star$.

Since $\ccalC$ is convex [from Lemma~\eqref{L:1}], there must exist a vector $\bba{\,=\,}[a_0, a_1, ..., a_L]{\,\in\,}\mbR^{L+1}$ such that\looseness=-1
\begin{equation}\label{e:proof1.1}
    {\bbxi^\star}^\top\bba{\,\geq\,}\bbxi^\top\bba, \forall\,\bbxi{\,\in\,}\ccalC.
\end{equation}
Then, it must be that $\bba$ belongs to the non-negative orthant and $a_0{\,>\,}0$~\cite[Theorem~3]{paternain2022safe}. 
With this, we define $\tbblambda{\,=\,}[a_i{/}a_0]_{i{=}1}^{L}$ and\looseness=-1
\begingroup
\abovedisplayskip=-2pt
\begin{equation}\label{e:proof1.2}
    \tdp{\,=\,}\argmax\limits_{p\in\ccalP}\,\,v(p)+\sum\limits_{i=1}^L\tdlambda_i(u_i(p)-c_i).
\end{equation}
\endgroup
Let ${\tilde{\bbxi}}{\,=\,}[v(\tdp), u_1(\tdp){\,-\,}c_1,..., u_L(\tdp){\,-\,}c_L]$ and notice that ${\tilde{\bbxi}}{\,\in\,}\ccalC$.
Hence, from~\eqref{e:proof1.1} and~\eqref{e:proof1.2} we have\looseness=-1
\begingroup
\abovedisplayskip=2pt
\belowdisplayskip=-2pt
\begin{equation*}
    \max\limits_{p\in\ccalP}\,\ccalL(p,\tbblambda)=\ccalL(\tdp,\tbblambda)=[1,\tdlambda_1,...,\tdlambda_L]\cdot\tilde{\bbxi}=\frac{\bba^\top}{a_0}\cdot\tilde{\bbxi} \leq \mathit{P}^\star.\nonumber
\end{equation*}
\endgroup
\end{proof}
\vspace{-1em}

Inspired by~\cite[Theorem~1]{eisen2019learning}, one can further show that the optimality gap of the parameterized~\eqref{e:p2} depends linearly on the approximation capabilities of the parameterization of the policy function.
This suggests that if our proposed $p_{\psi}$ is a good parameterization of the power policy space, then the solution given by the PD learning iterations in~\eqref{e:lag:1}-\eqref{e:lag:7} will be close to the optimal solution of our original problem~\eqref{e:p1}.
Let us elaborate on this point by deriving the bounds of the parameterized dual value, starting with generalizing~\eqref{e:p2} as follows
\begingroup
\abovedisplayskip=-2pt
\belowdisplayskip=4pt
\begin{subequations*}
    \begin{alignat*}{3}
        P_\psi^\star &= \max\limits_{\bbTheta,\bby,\bbc}\,\, g(\bby), \label{e:p2g}\tag{\ref*{e:p2}$^{\prime}$}\\
        \text{s.t.} \quad & {\bbz_{\psi}}(\bbTheta)\geq \bby, \,\, {u_{\psi}}_i(\bbTheta) \geq c_i,\,\, \forall\,i, \\
         & \bby \geq \bbzero, \bbc{\,\in\,}[\bbc_{0},+\infty),\,\, 
         \bbTheta{\,\in\,}\ccalP_{\psi}, \,\,
         \forall\,\bbH,
    \end{alignat*}
\end{subequations*}
\endgroup
with ${\bbz_{\psi}}(\bbTheta){\,=\,}\mbE{\left[\,f_0(\bbp_{\psi},\bbH)\,\right]}$ as the elementwise expectation of the objective function, 
${u_{\psi}}_i(\bbTheta){\,=\,}\mbE{\left[{\fc}_i(\bbp_{\psi},\bbH) \,|\, {p_{\psi}}_i \!> \!0\right]}$ as the expectation of the constraint function(s) (conditioned on the workers' participation),
$c_i{\,\geq\,}c_{0,i}$ (with the feasible region $\ccalC{\,:=\,}\{c_{0,i}{\,|\,}c_{0,i}{\,\geq\,}0,\forall i\}$) as the constraint value for worker $i$,
and $\ccalP_{\psi}{\,:=\,}\{\bbTheta{\,|\,}p_{\psi}(\bbH;\bbTheta){\,\in\,}\ccalP, \forall\,\bbH\}$ as the neural parameter space. 
The Lagrangian of~\eqref{e:p2g} is then given by
\begin{equation*}\label{e:lag_g}
    \begin{aligned}
        \ccalL_{\psi}(\bbTheta,\bby,\bbc,&\bblambda_y,\bblambda_c)= g(\bby)\\
        &+ \bblambda_y^{\top} (\bbz_{\psi}(\bbTheta)-\bby)
        + \bblambda_c^{\top}(\bbu_{\psi}(\bbTheta)-\bbc),
    \end{aligned}
\end{equation*}
and hence the dual value is given by
\begingroup
\begin{equation}\label{e:dual_param}
    D_{\psi}^\star = \min\limits_{\bblambda_y,\bblambda_c{\geq}\bbzero}\,\, \max\limits_{\bbTheta{\in}\ccalP_{\psi}, \bby{\geq}\bbzero, \bbc{\geq}\bbc_0}\, \ccalL_{\psi}(\bbTheta,\bby,\bbc,\bblambda_y,\bblambda_c).
\end{equation}
\endgroup
We require two additional assumptions to bound the error introduced by the parameterization.

\vspace{2mm}
\noindent (AS4)\label{as:4}~The constraint functions ${\fc}_i(\bbp,\bbH)$ are expectation-wise Lipschitz on $\bbp$ for all $\bbH$.
More precisely, for any pair of power allocations $\bbp^1 = p^1(\bbH)$ and $\bbp^2 = p^2(\bbH)$ for $p^1, p^2 \in \ccalP$, there exists a constant $L$ such that 
\begingroup
\begin{align}
    \mbE[|{\fc}_i(\bbp^1,\bbH) & - {\fc}_i(\bbp^2,\bbH)| \,\,\big|\,\,[\bbp^1]_i{\,>\,}0 \text{ and } [\bbp^2]_i{\,>\,}0]\nonumber\\
    &\leq L \, |{\fc}_i(\bbp^1,\bbH) - {\fc}_i(\bbp^2,\bbH)|.\label{e:lipsch}
\end{align}
\endgroup
\noindent (AS5)\label{as:5}~The parameterization $p_\psi(\cdot; \bbTheta)$ is $\epsilon$-universal in $\ccalP$. 
More precisely, for any $p \in \ccalP$ there exist parameters $\bbTheta$ such that
\begingroup
\begin{equation*}\label{e:epsilon_universality}
    \mbE[\| p(\bbH) - p_\psi(\cdot; \bbTheta)\| ] \leq \epsilon.
\end{equation*}
\endgroup
\noindent Notice that \hyperref[as:4]{(AS4)} is immediately satisfied by constraint functions ${\fc}_i(\bbp,\bbH)$ that are Lipschitz continuous on $\bbp{\,\in\,}(0,P_{\max}]$, such as those considered in our problem of interest.
Furthermore, \hyperref[as:5]{(AS5)} is not as much an assumption of our problem but rather a statement about the quality of the parameterization adopted.
With this notation in place, the following result holds.

\begin{theorem}\label{T:2}
If problem~\eqref{e:p2g} is feasible then, under assumptions \hyperref[as:1]{(AS1)}-\hyperref[as:5]{(AS5)}, its dual value $D_{\psi}^{\star}$ in~\eqref{e:dual_param} is bounded by
\begingroup
\begin{equation}\label{e:bound}
    P^\star{\,-\,}\!L \epsilon\left(\|\bblambda_y^\star\|_1+\|\bblambda_c^\star\|_1\right) \leq D_{\psi}^{\star} \leq P^\star +\,L\epsilon \|\bblambda_c^\star\|_1,
\end{equation}
\endgroup
where $L$ and $\epsilon$ are the constants in \hyperref[as:4]{(AS4)} and \hyperref[as:5]{(AS5)}, respectively.
\end{theorem}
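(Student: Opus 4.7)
The plan is to build on the zero duality gap established in Theorem~\ref{T:1}, then leverage the Lipschitz condition (AS4) together with the $\epsilon$-universality (AS5) to track how much the parameterization can shift the dual value away from $P^\star$. Theorem~\ref{T:1} guarantees the existence of optimal dual variables $\bblambda^\star = (\bblambda_y^\star, \bblambda_c^\star)$ with $P^\star = D^\star = q(\bblambda^\star)$, where $q$ is the dual function of~\eqref{e:p1g}. These multipliers will serve as constraint sensitivities throughout both bounds.

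\textbf{Upper bound.} Since $D_\psi^\star$ is defined as a minimum over dual variables, plugging in the unparameterized optimal $\bblambda^\star$ gives $D_\psi^\star \leq q_\psi(\bblambda^\star)$. I would then bound $q_\psi(\bblambda^\star) - P^\star$ by maximizing $\ccalL_\psi(\bbTheta,\bby,\bbc,\bblambda^\star)$ over its primal arguments and comparing against $q(\bblambda^\star)$. For any $\bbTheta \in \ccalP_\psi$, the induced policy $p_\psi(\cdot;\bbTheta)$ already lies in $\ccalP$, so the objective and $\bby$-contribution match; the $\bblambda_c^\star$-weighted constraint terms, however, are perturbed because (AS5) only places the parameterized policy within $\epsilon$ of a corresponding element of $\ccalP$ in expected $L^1$-distance, and (AS4) translates this into at most $L\epsilon$ per constraint $\fc_i$. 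The surviving contribution yields the $L\epsilon\|\bblambda_c^\star\|_1$ slack.

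\textbf{Lower bound.} By weak duality, $D_\psi^\star \geq P_\psi^\star$, so it suffices to lower-bound $P_\psi^\star$. My plan is to apply (AS5) to a \emph{tightened} version of~\eqref{e:p1g} in which each $c_i$ is increased by $L\epsilon$ and each $\bby$-level is decreased by $L\epsilon$. The resulting approximating parameters $\bbTheta^*$ are then exactly feasible for the original parameterized constraints, thanks to the $L\epsilon$ Lipschitz buffer from (AS4). A linear sensitivity argument---using $\bblambda_c^\star$ as the per-unit cost of tightening $\bbc$, and interpreting $\bblambda_y^\star$ as a supergradient of the concave $g$ at the optimum to absorb the objective-side perturbation---shows that the value attained by this feasible $\bbTheta^*$ is at least $P^\star - L\epsilon(\|\bblambda_y^\star\|_1 + \|\bblambda_c^\star\|_1)$. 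Combining this with $D_\psi^\star \geq P_\psi^\star$ delivers the left inequality in~\eqref{e:bound}.

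\textbf{Main obstacle.} The technical crux is the lower bound: the $\bbTheta^*$ returned by (AS5) is \emph{a priori} infeasible for the parameterized problem because the $\epsilon$-approximation error propagates into the constraint values through (AS4), and one cannot simply plug it into $\ccalL_\psi(\cdot,\bblambda^\star)$ and invoke weak duality. The resolution hinges on tightening the original problem before applying (AS5), so that the Lipschitz slack exactly compensates for the induced infeasibility, and then invoking the linear sensitivity of the dual value with respect to constraint perturbations. A related subtlety is that (AS4) is stated only for the constraint functions $\fc_i$, so extracting the $\|\bblambda_y^\star\|_1$ term in the lower bound requires treating the objective perturbation separately---either via an implicit Lipschitz property of $f_0$ or, more cleanly, via concavity of $g$ together with $\bblambda_y^\star \in \partial g(\bby^\star)$---without adding regularity hypotheses beyond (AS1)--(AS5).
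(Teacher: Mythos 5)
Your proposal is correct in substance and its upper bound follows essentially the paper's argument: both compare the parameterized dual function against the unparameterized one at (or near) the optimal multipliers, observe that the objective-side term is controlled because $\ccalP_\psi\subseteq\ccalP$, and charge the remaining slack to the constraint terms via H\"older plus (AS4)--(AS5), yielding $P^\star+L\epsilon\|\bblambda_c^\star\|_1$. The lower bound is where you genuinely diverge. The paper never leaves the dual domain: it shows $\max_{\bbTheta\in\ccalP_\psi}v_d(\bblambda,\bbp_\psi)\geq v_d(\bblambda,\bbp)-L\epsilon(\|\bblambda_y\|_1+\|\bblambda_c\|_1)$ for every $p\in\ccalP$, recognizes the resulting expression as the dual value of the $(L\epsilon)$-tightened problem, and then invokes \emph{strong} duality of that perturbed problem (cited from Eisen et al.) before applying the perturbation inequality. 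You instead go through the primal: $D_\psi^\star\geq P_\psi^\star$ by weak duality, then exhibit a feasible $\bbTheta^*$ by approximating the tightened problem's optimal policy, so that $P_\psi^\star\geq P_{L\epsilon}^\star$, and finish with the same perturbation inequality. Your route buys you independence from the strong-duality claim for the perturbed problem, at the cost of having to certify that the approximating $\bbTheta^*$ actually lies in $\ccalP_\psi$ (i.e., that the GCN output preserves the zero/non-zero support pattern baked into $\ccalP$ by (AS3)); note, however, that the paper's $\min_{\bbTheta\in\ccalP_\psi}$ step tacitly relies on the very same property of (AS5), so this is a shared, not a new, gap. Finally, you are right to flag that extracting the $\|\bblambda_y^\star\|_1$ term needs Lipschitz-type control of the objective function $f_0$, which (AS4) literally grants only to the constraint functions; the paper silently applies the same constant $L$ to the $f_0$ difference in its chain of inequalities, so your supergradient-of-$g$ workaround is a legitimate (and arguably cleaner) patch rather than a deviation. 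One small imprecision on your side: in the upper bound, the objective-side terms do not ``match'' exactly --- the correct statement, and the one the paper proves via monotonicity of $g$, is that $\mbE[f_0(\bbp_\psi,\bbH)]\leq\mbE[f_0(\bbp^\star,\bbH)]$ so that the $\bblambda_y$-weighted term is nonpositive; this still delivers the claimed inequality.
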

\begin{proof}
See Appendix~\ref{ap:b}.
\end{proof}

As revealed by Theorem~\ref{T:2}, the dual value of the parameterized problem~\eqref{e:p2g} becomes closer to the primal value of the \emph{non-parameterized} counterpart~\eqref{e:p1g} as $\epsilon$ decreases.
Indeed, a smaller $\epsilon$ entails a richer parametric family of policies $p_\Psi$, reducing the difference between the parameterized and non-parameterized versions of the problem.
In the limit, when $\epsilon$ approaches zero, we recover the null duality gap result in Theorem~\ref{T:1}.
It is worth noting that small values of $L$ can also lead to smaller gaps since smaller $L$ typically corresponds to better-behaved constraints [cf.~\eqref{e:lipsch}].
In practice, we nevertheless make a choice of parameterization that leads to not only a small duality gap but also a good trade-off between expressiveness and tractability.
A neural network with deep and specialized architecture can achieve both an accurate approximation of the original problem and efficient optimization of the parameterized problem.
For instance, our proposed PDG (Fig.~\ref{ff:main}) can result in good empirical performance, as we demonstrate in the next section. 
We also compare PDG to a PD solution based on multi-layer perceptrons (MLP) -- whose universal approximation capabilities are well established~\cite{hornik1989multilayer} -- and empirically show that the specialized architecture of PDG helps us outperform the graph-agnostic learning method.

\section{Numerical experiments}\label{s:exp}

We evaluate the proposed power allocation method PDG\footnote{\scriptsize{Code to implement the proposed method is available at \url{https://github.com/bl166/WirelessFL-PDG}.}} within the standard FL pipeline (Fig.~\ref{ff:fl-pip}) based on either of the two following assumptions -- i.i.d. and non-i.i.d. local data distribution.
The performance of the power allocation methods is assessed using two metrics, namely the wireless transmission performance and the prediction performance of the FL global model.
\revminor{
Before delving into results and insights, let us first introduce the data preparation, experiment setups, and candidate power policies.
}

\begin{figure}[t]
\centering
    \includegraphics[width=9cm,
    trim= 0 .5cm 8cm 1cm,
    ]{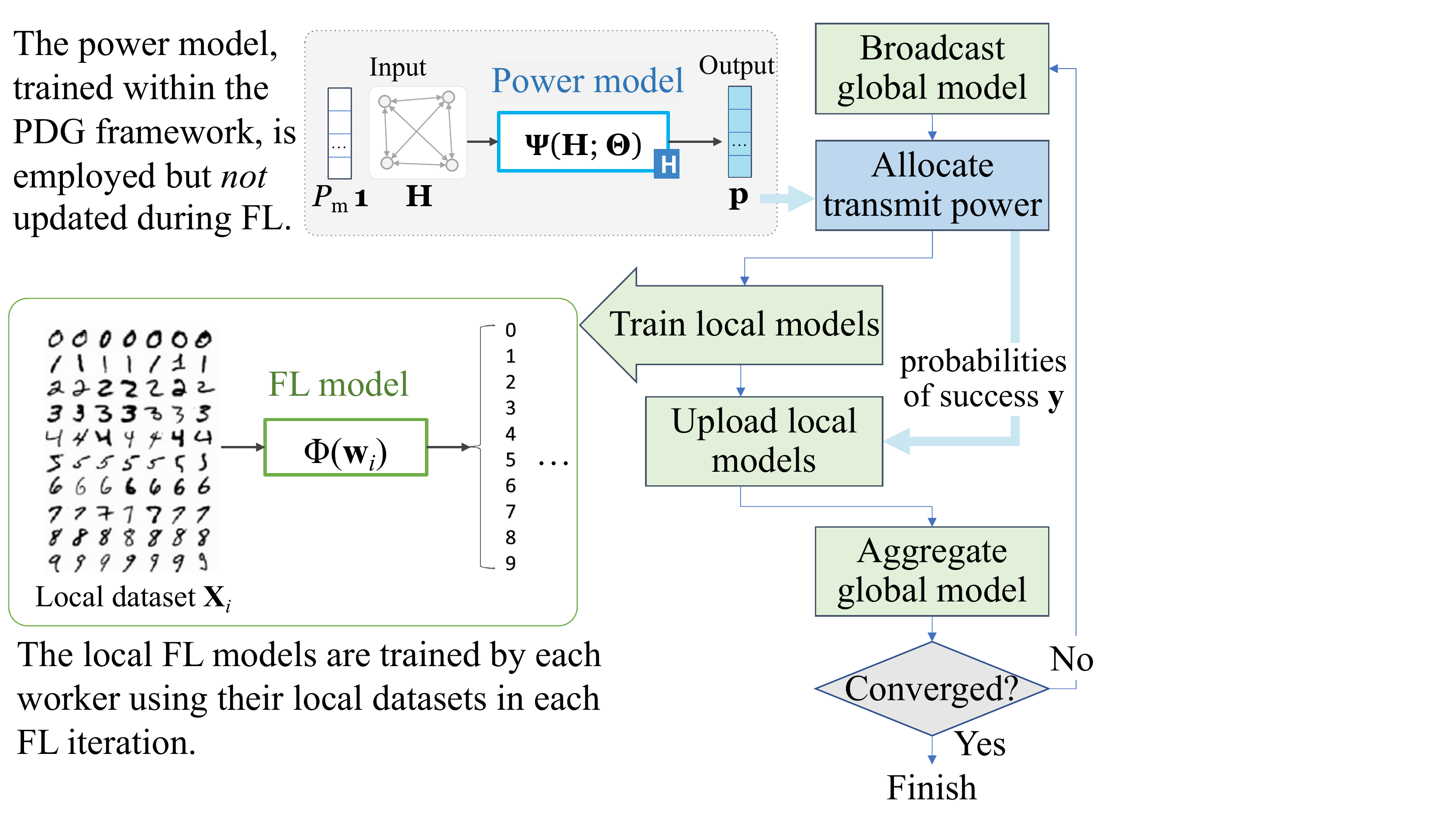}
    \caption{
    Illustration of the FL pipeline for MNIST classification.
    The proposed and candidate power allocation methods are used to allocate transmit power for local model uploads.
    }
    \label{ff:fl-pip}
    \vspace{-1em}
\end{figure}

For wireless channels, we employ the same data generator as in~\cite{matthiesen2020globally} to simulate wideband spatial (WBS) interference networks with Rayleigh fading effects\footnote{\scriptsize{\url{https://github.com/bmatthiesen/deep-EE-opt}}.} served by \revminor{one BS} with $n_R{\,=\,}10$ antennas.
The downlink communication cost is considered negligible while we focus on the uplink transmission from workers to servers.
\revminor{A number of} 3000 channel realizations are generated and equally split into training (to update model parameters), validation (to select the best-performing epoch for test), and testing (to report the performance) sets.
{Training is consistently conducted with 8-worker networks, and the default configuration for testing is also using 8-worker networks.}
For the remaining system configuration, 
{$P_{\max}$ is default at $-20 \dBW$ in all experiments except for the one that specifically targets at different $P_{\max}$ values.}
Following every intermediate layer $t{\,=\,}1,\cdots,T{-}1$, Exponential Linear Unit (ELU) is used as the non-linear activation function $\varphi_t$;
following the last layer $t{\,=\,}T$, we apply a scaled sigmoid to guarantee that the output power is contained in 
$[0,P_{\max}]$.
Learning rates $\gamma_{\Theta}{\,=\,}10^{-3}$ and $\gamma_{\{y,r,e,\lambda_y,\lambda_r,\lambda_e\}}{\,=\,}10^{-4}$ are fixed for all iterations.
A very small clamping value of $10^{-20}$ is set to stabilize the training process; 
if the predicted power is smaller than this value, it will be considered practically zero.
A total of 1000 PD learning epochs are performed in an unsupervised manner with early termination given unimproved performance in 100 consecutive epochs.\looseness=-1

\revmajor{
The plots presented in Fig.~\ref{ff:converge} illustrate the convergence of the PDG method on the validation channels. 
Fig.~\ref{ff:converge1} and~\ref{ff:converge2} display the worker-wise values pertaining to the right-hand-side of inequalities~\eqref{e:p2:b} and~\eqref{e:p2:c}, which represent delay and energy constraints, respectively.
To provide context, horizontal dashed lines denote the empirically selected constraint constants, which remain uniform across all workers. 
Successful instances where all workers satisfy both constraints are marked with blue dots on the objective curve in Fig.~\ref{ff:converge3}.
This plot also demonstrates the growth of the objective value in~\eqref{e:p2} as the learning progresses.
From these convergence plots, it becomes evident that PDG excels in learning to maximize the objective while upholding both constraints simultaneously.
Subsequently,
}
we compare PDG to one learning-based and two model-based baselines:

\begin{figure*}[t]
\centering
  \begin{subfigure}{0.26\linewidth}
      \includegraphics[
      height=4.4cm, trim= 0 0 12cm 0, clip
      ]{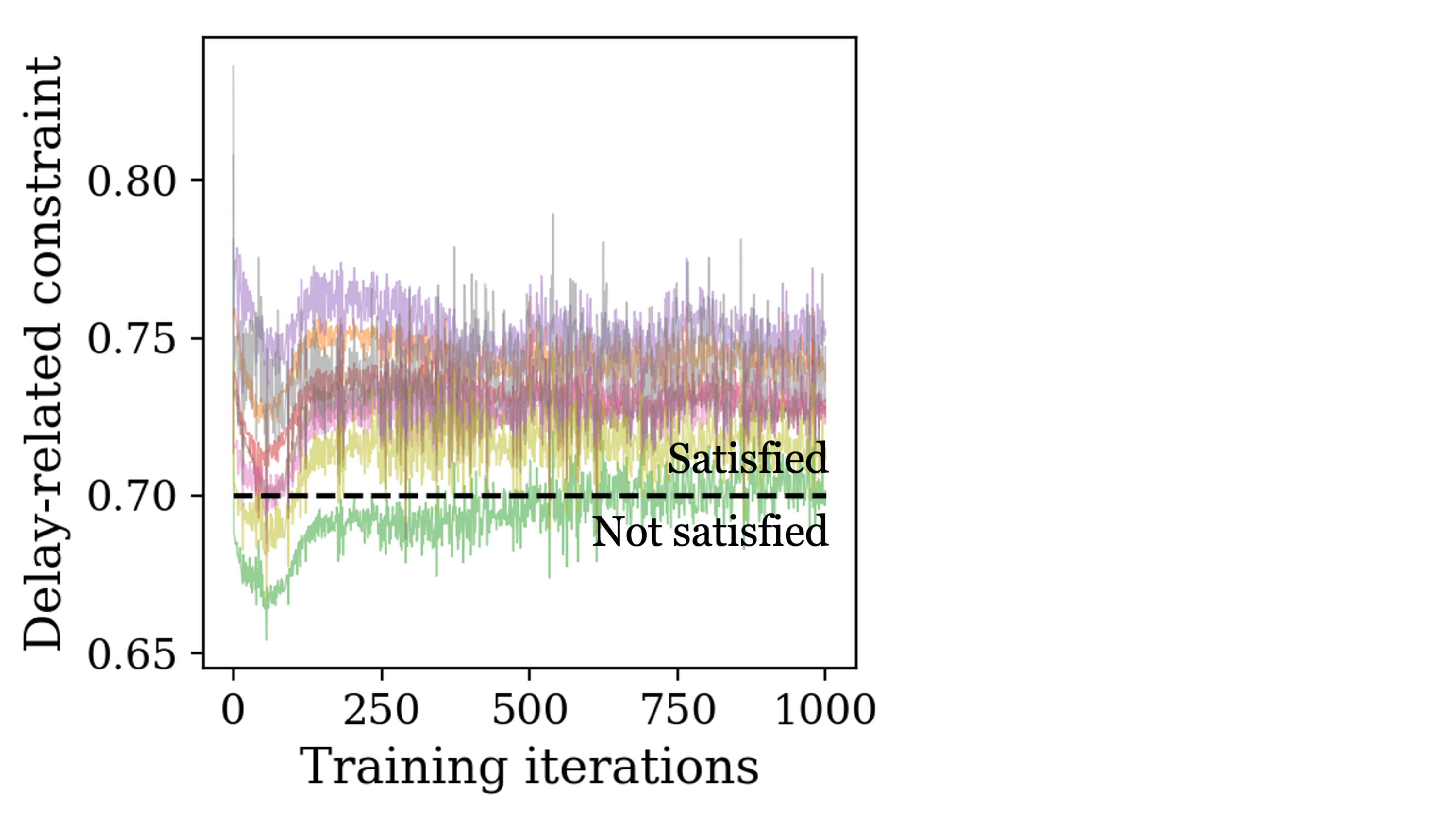}
      \vspace{-2em}
      \caption{}  \label{ff:converge1}
  \end{subfigure}\hfill
  \begin{subfigure}{0.26\linewidth} 
      \includegraphics[
      height=4.4cm, trim= 0 0 12cm 0, clip
      ]{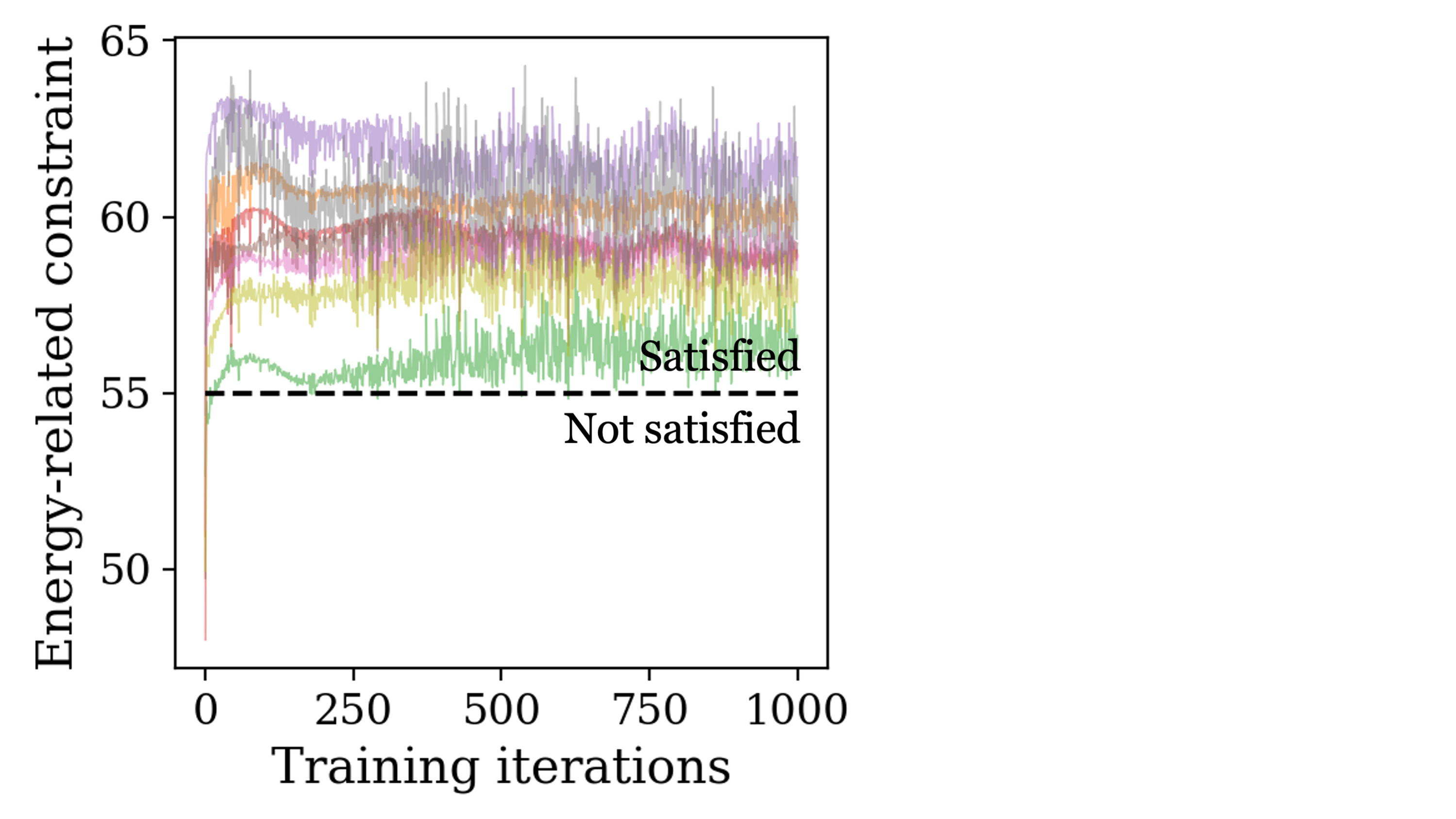}
      \vspace{-2em}
      \caption{}  \label{ff:converge2}
  \end{subfigure}\hfill
  \begin{subfigure}{0.34\linewidth} 
      \includegraphics[
      height=4.4cm, trim= 0 0 7cm 0, clip
      ]{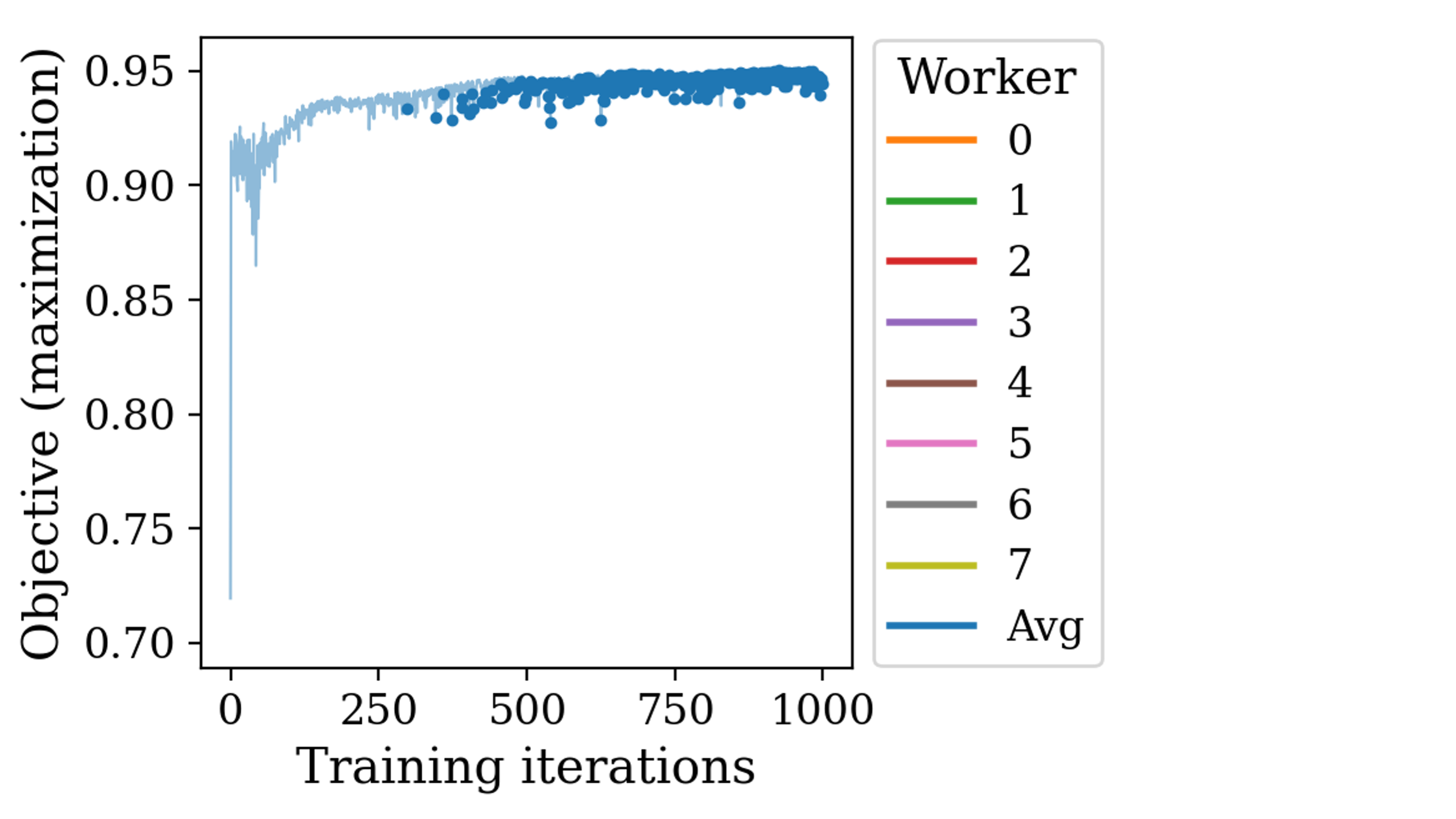}
      \vspace{-2em}
      \caption{}  \label{ff:converge3}
  \end{subfigure}
  \vspace{-.5em}
  \caption{
  \revmajor{
  Curves showing the convergence of constraints (with the constraint constants $r_0$ and $e_0$ as the respective dashed lines) and the maximization objective learned by PDG.
  The colors in the constraint plots (a) and (b) distinguish between separate workers, while the markers in the objective plot (c) highlight epochs where both constraints are satisfied by all workers. }
  }\label{ff:converge}
  \vspace{-.5em}
\end{figure*}

\begin{itemize}[topsep=0pt, wide=0pt]
    \item[1)] \textbf{Rand} is a random power policy that independently samples individual workers' transmit power from a uniform distribution $\sim\ccalU(0,P_{\max})$.
    {Occasionally, the randomly chosen power for worker $i$ violates the energy constraint.
    In these cases, worker $i$ is reassigned $p_i{\,=\,}0$.}
    \item[2)] \textbf{Orth} is the optimal power policy for orthogonal channels originally proposed in~\cite{chen2020joint}.
    It results in the participating worker $i$ transmitting at $p_i{\,=\,}\min\{P_{\max}, P_{\text{E},i}\}$, where $P_{\text{E},i}$ is the maximum power for worker $i$ that satisfies the energy constraint.
    \item[3)] \textbf{PDM} is a multi-layer perceptron (MLP) power policy model trained following the same PD algorithm as PDG. 
    Its input consists of the concatenation of the flattened $\bbH$ and $P_{\max}$.
    It is propagated through 5 hidden layers of $\{128, 256, 64, 16, 8\}$ neurons and the Leaky ReLU non-linearity in between.
    The final activation function is identical to that of the proposed architecture.
\end{itemize}

In order to guarantee that all constraints in~\eqref{e:p1} are met, the Rand and Orth allocation methods require explicit worker selection after the power is determined for all candidate workers. 
A subset of workers satisfying the constraints will participate in the current FL iteration.
In the learning-based methods PDM and PDG, however, post-allocation worker selection is not necessary. 
Instead, the models learn from data to automatically meet the constraints even for unseen instances.
To be precise, some workers may be assigned zero transmit power and thus excluded from participating. 
In this way, our experiments confirm that the constraints are satisfied on unseen channels through appropriate training.

\begin{figure*}[t] 
\centering
    \begin{subfigure}{0.27\linewidth}
     \raggedleft
     \includegraphics[height=4.8cm,
     trim=.3cm 0 0 0, 
        ]{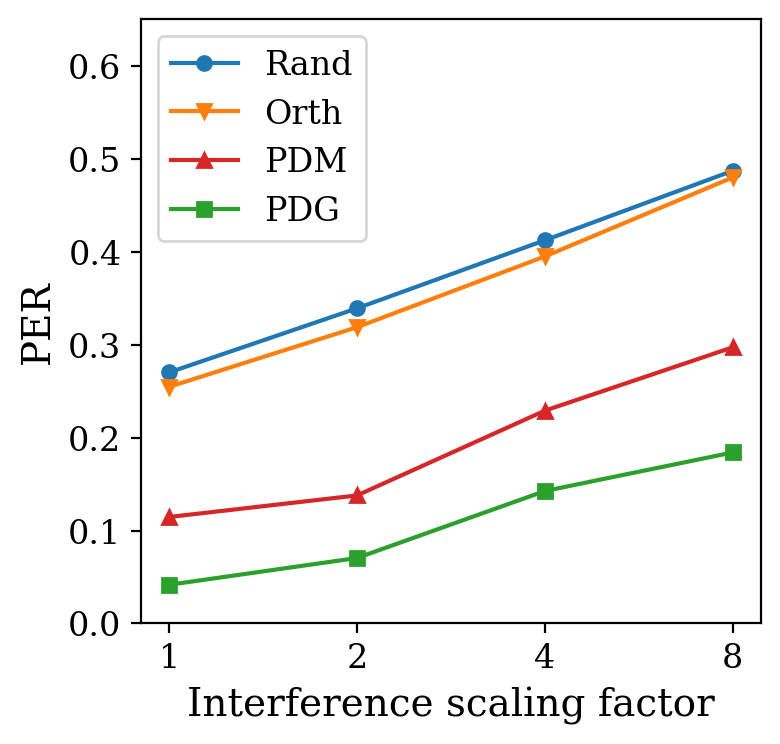}
        \vspace{-1.8em}
        \caption{}
        \label{ff:1:r1}
    \end{subfigure}
    ~\hfill
    \begin{subfigure}{0.23\linewidth} 
    \raggedright
        \includegraphics[height=4.8cm,
        trim=1.6cm 0 0 0, clip
        ]{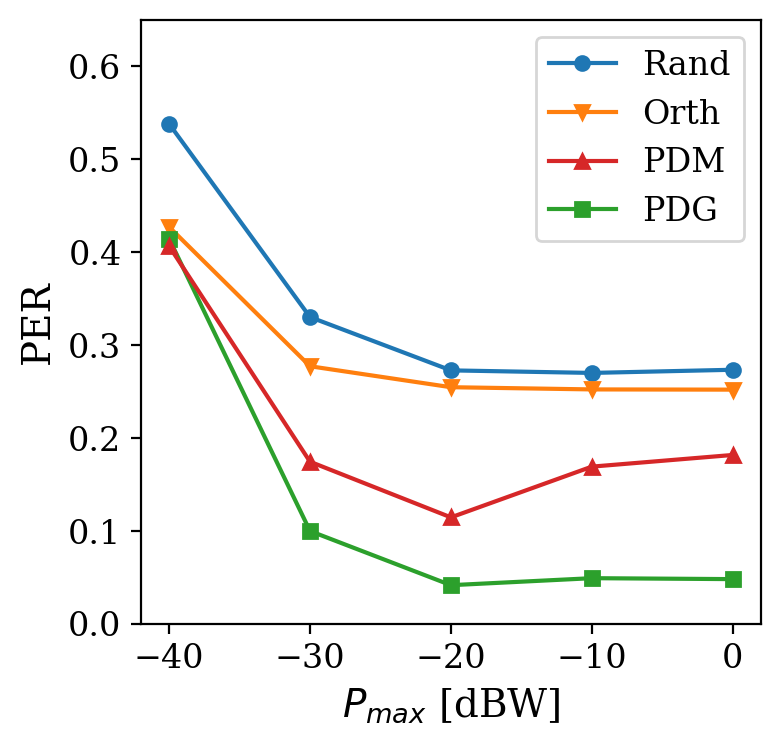}
        \vspace{-1.8em}
        \caption{}
        \label{ff:1:r2}
    \end{subfigure}
    ~\hfill
    \begin{subfigure}{0.23\linewidth} 
    \raggedright
        \includegraphics[height=4.8cm,
        trim=2.6cm 0 0 0, clip
        ]{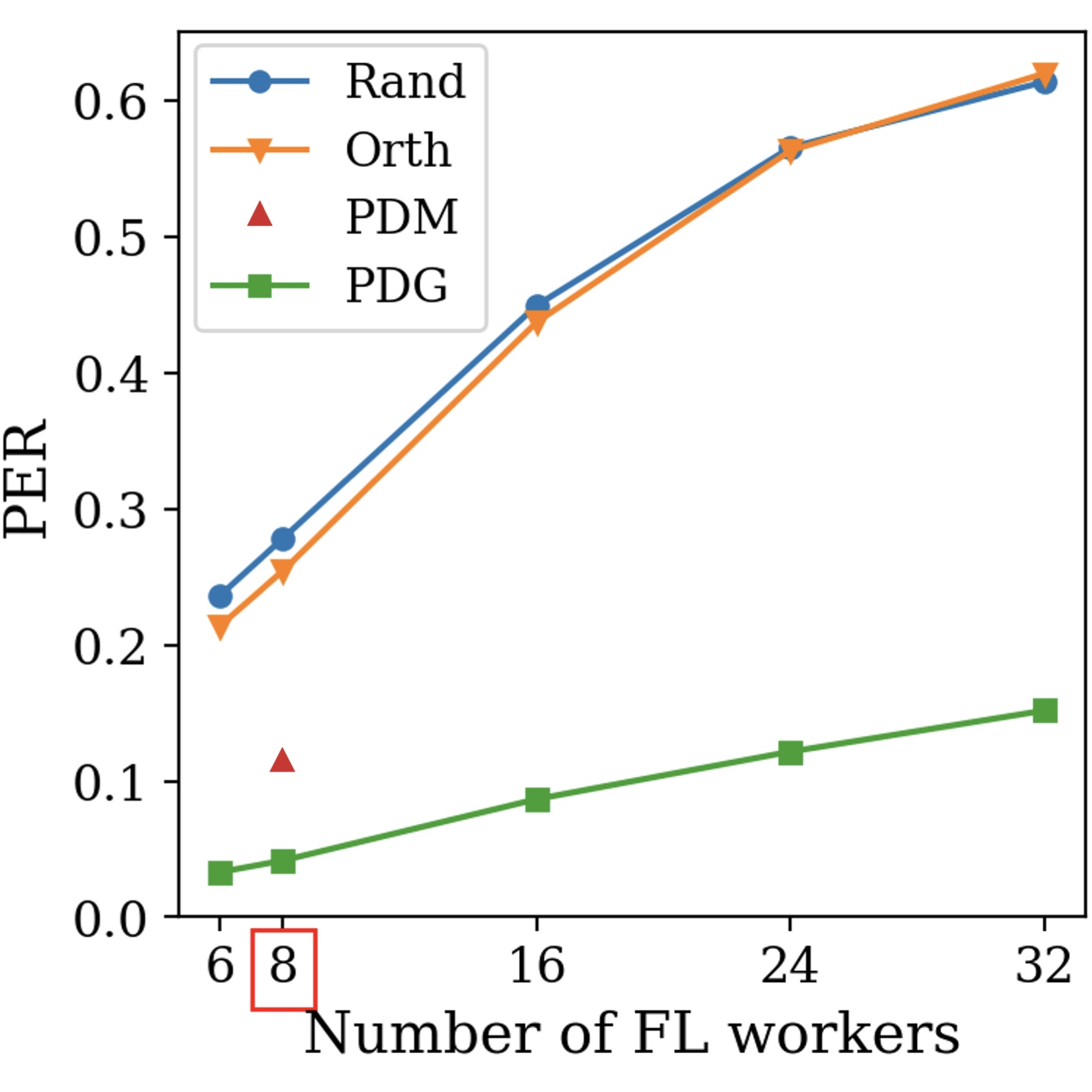}
        \vspace{-1.8em}
        \caption{}
        \label{ff:1:r3}
    \end{subfigure}
    ~\hfill
    \begin{subfigure}{0.23\linewidth} 
    \raggedright
        \includegraphics[height=4.8cm,
        trim=1.6cm 0 0 0, clip
        ]{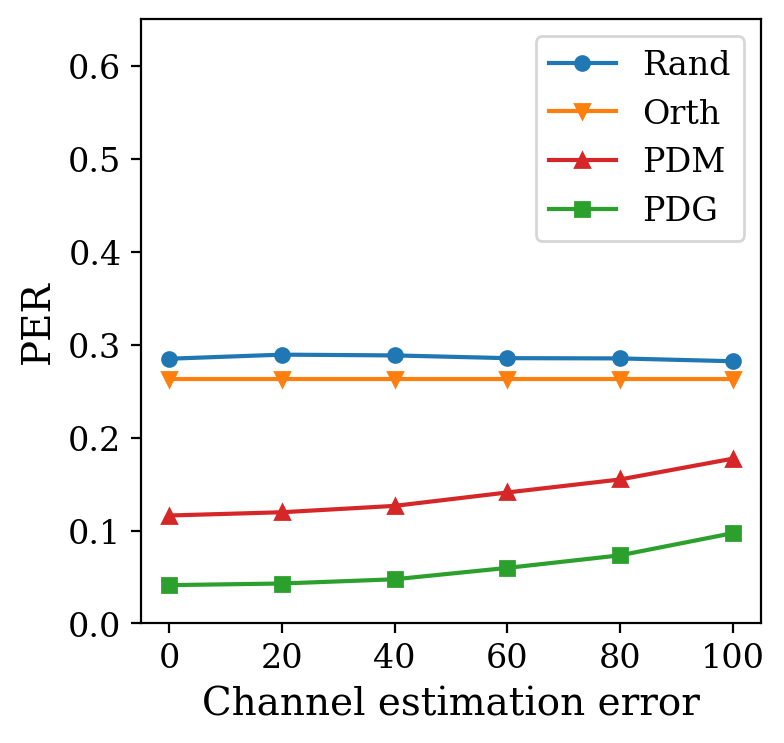}
        \vspace{-1.8em}
        \caption{}
        \label{ff:1:r4}
    \end{subfigure}
\vspace{-1.8em}
\caption{Weighted sum PER of selected workers against 
(a)~interference strength,  
(b)~maximum power constraint value,
\revminor{(c)~network size}, and
(d)~noise variance of channel estimates.
The proposed PDG outperforms the baseline power allocation strategies in all the metrics considered.
}\label{f:1}
\vspace{-1em}
\end{figure*}

\subsection{Communication Proxy Evaluation}\label{ss:commproxy}
\vspace{-.3em}

\noindent\textbf{Transmission over different interference intensities.}
In this experiment, we study the impact of the strength of wireless multiuser interference. 
To this end, we scale the interference coefficients ($\beta_{ij}$) in the CSI matrices by \{1,2,4,8\} and compare the transmission performance of candidate methods in those modified networks.
Training and the subsequent testing are conducted at each scaling factor.
Fig.~\ref{ff:1:r1} shows a communication proxy to the final FL errors, namely the weighted sum PER of all the workers that transmit.
Although for orthogonal channels, Orth is the optimal power policy, its performance within non-orthogonal channels is only comparable with that of Rand.
\revminor{In contrast}, the two PD learning methods achieve smaller PER via learning from sufficient channel samples. 
Furthermore, the graph-based PDG can transmit with even fewer errors than the topology-agnostic PDM in all tested interference intensities. 

\vspace{1mm}
\noindent\textbf{Transmission performance versus maximum power capability.}
In Fig.~\ref{ff:1:r2}, we compare the transmission performance of candidate power allocation methods against different upper bounds of transmit power in $\{-40, -30, -20, -10, 0\}\dBW$.
Training and the subsequent testing are conducted at each $P_{\max}$ level.
Intuitively, the choice of $P_{\max}$ may reflect in fundamentally different bottlenecks in power allocation. 
When $P_{\max}$ is small enough, we are essentially looking at a trivial case where the multiuser interference is small and the bottleneck is the power magnitude itself.
Consequently, it can be optimized by simply transmitting at the maximum power whenever possible.
All power policies that can either naturally follow (e.g., Orth) or pick up (e.g., PDM and PDG) this strategy have the gap between their performance closed.
On the contrary, Rand fails because it cannot enforce the maximum power strategy.
As $P_{\max}$ grows, however, interference gradually becomes the major limitation. 
Thus, we observe that the advantage of PDG's policy is more conspicuous for larger $P_{\max}$ (high interference) settings.

\vspace{1mm}
\noindent\textbf{Transmission for different number of workers.}
It is important in practice that a power policy generalizes well across different network sizes, so that one can apply a pretrained model in other wireless networks. 
In this sense, PDG is preferred over PDM, for its GCN-based power policy is versatile whereas the MLP-based power policy of the latter can only take fixed input and output dimensions. 
Also, the permutation equivariance of GCN suggests that PDG is not only applicable but also robust to changes in network size.
To show this, we investigate the transmission performance of the candidate models in different-sized FL systems and show the comparison in Fig.~\ref{ff:1:r3}.
Despite training only using an 8-worker system, PDG consistently demonstrates better wireless uplink transmission than the other two baselines in \mbox{$\{6,8,16,24,32\}$-worker} systems.
Moreover, the performance degradation of PDG due to increased multiuser interference is not as bad as that of Orth and Rand, which further underscores its generalizability.\looseness=-1

\vspace{1mm}
\noindent\textbf{Transmission based on noisy channel information.}
\revminor{ 
Channel estimation has been a subject of extensive research, leading to a variety of approaches designed to estimate wireless scenarios with differing complexities and accuracy levels~\cite{soltani2019deep,hu2018super}. 
However, it is crucial to recognize that in real-world scenarios, our assumption of perfect channel knowledge may not hold true.
Therefore,} we conduct an experiment where noisy channel estimates are taken as input to the power allocation methods.
After getting the allocations, we evaluate them using the actual channels to check their PER performance. 
Noisy channel estimates are simulated using additive white Gaussian noise (AWGN) of appropriate variances.
{As shown on the x-axis of Fig.~\ref{ff:1:r4}, different levels of variances are used to sample AWGN, which is then added to the input $\bbH$ [cf.~\eqref{e:csi}] to the allocation methods.
This is done to mimic the practically inevitable imperfection in channel knowledge. 
This plot gives an idea of} how the PER of the PD learning methods increases as a result of the deteriorating quality of channel estimation.
Both bear witness to a similar performance degradation, yet PDG consistently performs better than PDM. 
Note that it is reasonable that Rand and Orth are insensitive to noise because their proposed power schemes do not depend on specific channel realizations.

\vspace*{-1em}
\subsection{Analysis of Allocated Power and Simulated Transmissions }\label{ss:pow}

To reveal the source of performance gains that the proposed method provides, we show the histograms of allocated power of the test channel set given by PDG and PDM in Fig.~\ref{ff:pdist}.
These 8-worker channel samples are all subject to a maximum transmit power of ${-}20\dBW$ (i.e., $0{\,\leq\,}p{\,\leq\,}0.01\,\text{W}$, as shown on the horizontal axis).
It is clear how different the two power distributions are, in the sense that the PDM allocation tends to set maximum power allocation in a large number of cases (the tallest bar rightmost at $0.01\,\text{W}$) while suppressing transmissions for most of the others. 
In contrast, PDG can make finer and fairer allocations, from which one can easily observe that most of the allocated power values are smoothly distributed around $5\,\text{mW}$. 

\begin{figure}[t]
  \centering
  \includegraphics[width=8.5cm,
  trim = .5cm 0 0 0 ]{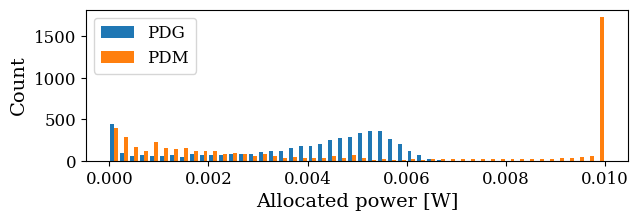}
  \vspace{-.5em}
  \captionof{figure}{
  Histograms of allocated transmit power values given by the proposed allocation method PDG and its topology-agnostic counterpart PDM. 
  }  \label{ff:pdist}
  \vspace*{-1em}
\end{figure}

Next, we proceed to perform the FL training process using real-world datasets. 
At the beginning of each FL global iteration, a channel instance is independently sampled from a WBS channel distribution. 
With this dynamic setting of the wireless environment, the FL process is simulated on a (by default) 8-worker system for 50 global iterations. 
The uplink transmissions at each iteration are simulated based on the computed PER values (which are used as probabilities with which transmission may fail) of individual workers. 
We display a box plot of the average number of successful transmissions made on one attempt (Fig.~\ref{ff:fl-suc}).
Since the system has 8 workers, this number will be 8 in the ideal case {(marked as `Ideal FL' in the figure)} where everyone gets to transmit with no loss, as shown in the figure.
We plot the boxes for each candidate allocation method, and PDG performance is observed to have not only the largest average but also the smallest variance. 
The methods of PDM, Orth, and Rand have worse performance. 
Thus, it is reasonable to expect that the PDG will result in the best FL performance (in the i.i.d. data case, at least), which we verify in the next section.

\begin{figure}[t]
  \centering
  \includegraphics[width=8cm,height=4.4cm,
    trim=.5cm 0 .5cm 0]{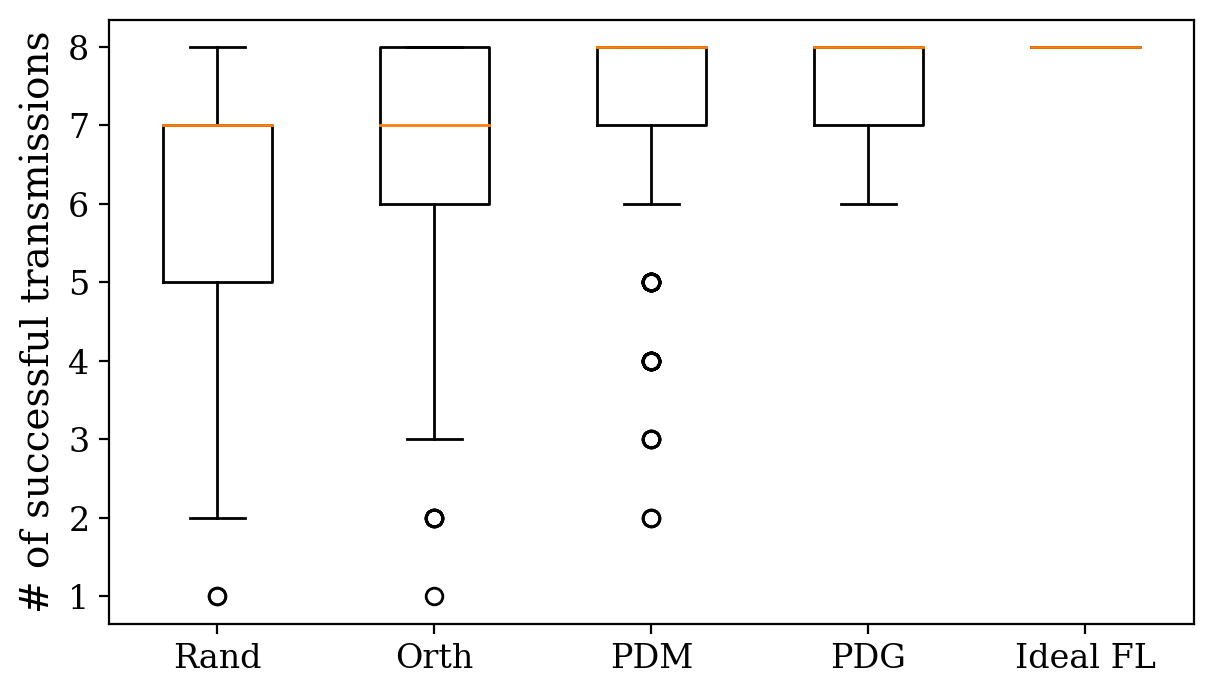}
\vspace{-.5em}
  \captionof{figure}{
  Comparison of power allocation methods on the average number of successful transmissions made in one FL global iteration during a 50-iteration simulation process. 
  }\label{ff:fl-suc}
\vspace*{-1em}
\end{figure}

\vspace*{-1em}
\subsection{Learning Performance of i.i.d. Federated Tasks}\label{ss:fed}

The power allocation methods in this paper serve the ultimate goal of promoting the convergence speed and learning accuracy of FL tasks.
\revmajor{In this regard, we test their performance on three FL tasks of different types: regression, binary text classification, and multi-class image classification. }

\vspace{2mm}
\noindent\textbf{\revminor{Air quality regression.}}
We consider air quality prediction using the UCI Air Quality dataset~\cite{de2008field}, which includes 8-dimensional hourly sensor features and hourly averaged $\text{O}_3$ levels as ground truth. 
The number of data samples at each worker is drawn from a uniform distribution $\ccalU(20,200)$.
This models the particularly challenging scenario where local workers only have access to a limited amount of data.

The regression learning model is a single-layer (50 hidden nodes) feedforward neural network with hyperbolic tangent non-linearity and mean-square error loss. 
Recalling the definition of the objective $g$ in~\eqref{e:wsq}, we determine the worker-specific weights $\bbw$ in $g$ based on the size of each worker's local dataset resulting from the uniform sampling process just introduced.
During local training, we use a batch size of 8 and the Adam optimizer with an empirical learning rate of $8{\times}10^{-4}$ for 50 FL iterations.\looseness=-1

A dynamic wireless environment is simulated by generating a new channel realization from the same WBS distribution at the beginning of every FL iteration.
\revminor{Then}, pretrained (or preset) power allocation methods are applied to determine the transmit power at each worker for uploading their local models to the server in the current FL iteration. 
Also based on the real-time channels, the PER of each transmission is computed, which provides the probabilities \revminor{from which we draw} the transmission outcomes (successful or not).
Should a transmission be deemed successful, the corresponding local model will be integrated into the global [cf.~\eqref{e:agg}] in the current FL iteration.\looseness=-1

In terms of the local training, owing to the limited number of local samples, participating workers perform a full epoch with their own on-device data during each FL iteration. 
To provide an empirical upper bound of FL performance, we consider the ideal FL scenario where transmission is lossless (i.e., all workers transmit with $100\%$ success rate in all iterations).
We report the average performance of the scaled root-mean-square errors (RMSE) against the number of FL iterations on a 100-sample test set over 10 random runs, as shown in Fig.~\ref{ff:fl-uci}. 

\begin{figure}[t]
\centering
    \includegraphics[width=8cm,
    trim=0.4cm .5cm 0.4cm 0]{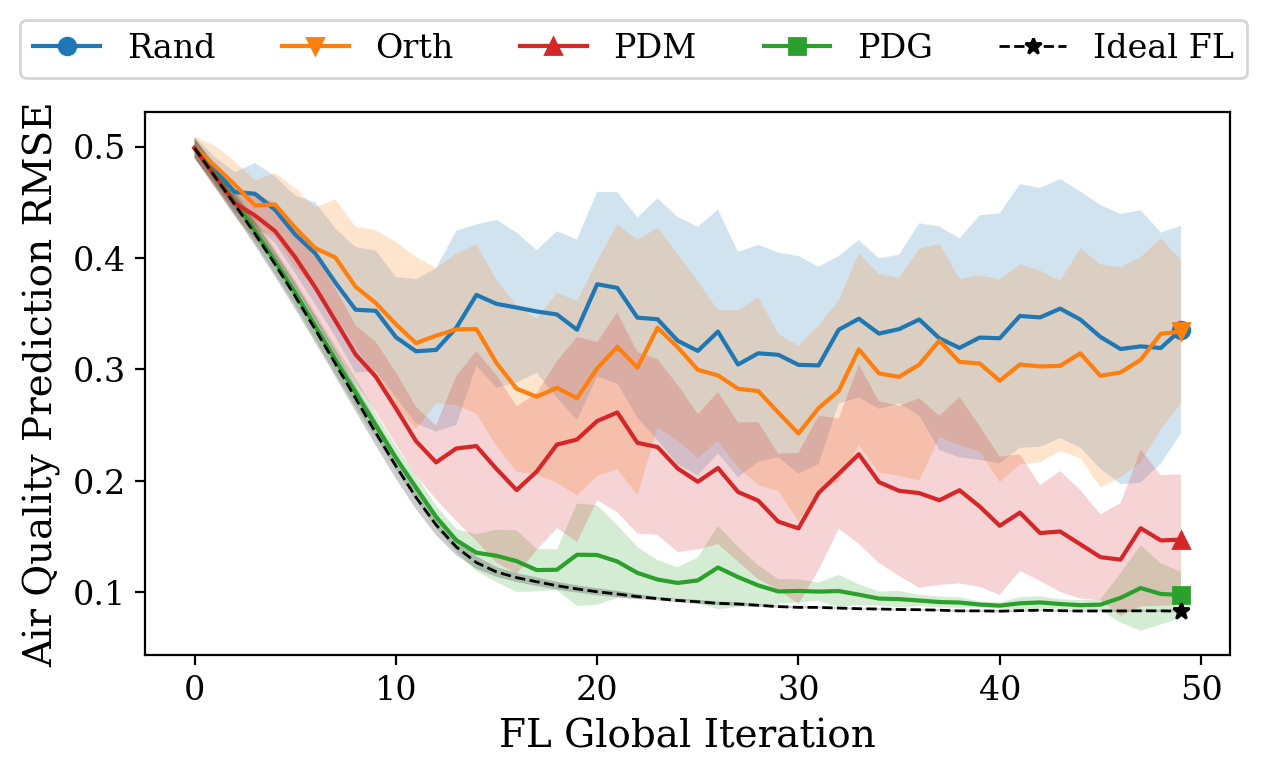}
    \caption{\revminor{Federated air quality regression, evaluated by the global model's scaled RMSE} at the end of each FL iteration.
    \revminor{Lower values reflect better power allocation.}}
    \label{ff:fl-uci}
    \vspace{-1em}
\end{figure}

\revmajor{
\vspace{2mm}
\noindent\textbf{Text sentiment classification.}\label{exp:text-iid}
We leverage the Internet Movie Database (IMDb) reviews dataset~\cite{lecun1998gradient} for binary sentiment classification.
This task serves as a benchmark for understanding sentiment in text data.
The IMDb dataset contains 50K data samples of balanced positive and negative sentiment labels. 
We reserve a subset of 7500 samples for test and randomly distribute another subset of 17.5K samples among 8 workers, ensuring that the ratio $k_i{/}K$ is identical to that of the previous task for all $i$.

We employ 50-dimensional pre-trained word embeddings~\cite{pennington2014glove} in conjunction with a bidirectional long short-term memory network (BiLSTM)~\cite{schuster1997bidirectional}, comprising 2 layers, each with 128 hidden nodes.
Due to the increased dataset size, we increase the batch size to 100 for the sake of efficiency.
Binary cross-entropy loss and the Adam optimizer with an empirical learning rate of $0.002$ are utilized.

The wireless setups are similar to the regression case, including the transmission-loss-free reference of ideal FL. 
Each time global aggregation occurs, we assess the performance of the global model by computing its classification error rate on the test set.
Based on various power allocation methods, the results are shown in Fig.~\ref{ff:fl-imdb}.
Similar to the first case, we report their average performance over 10 random realizations. 
}

\begin{figure}[t]
\centering
    \includegraphics[width=8cm,
    trim=0.3cm .5cm 0.5cm 0, 
    ]{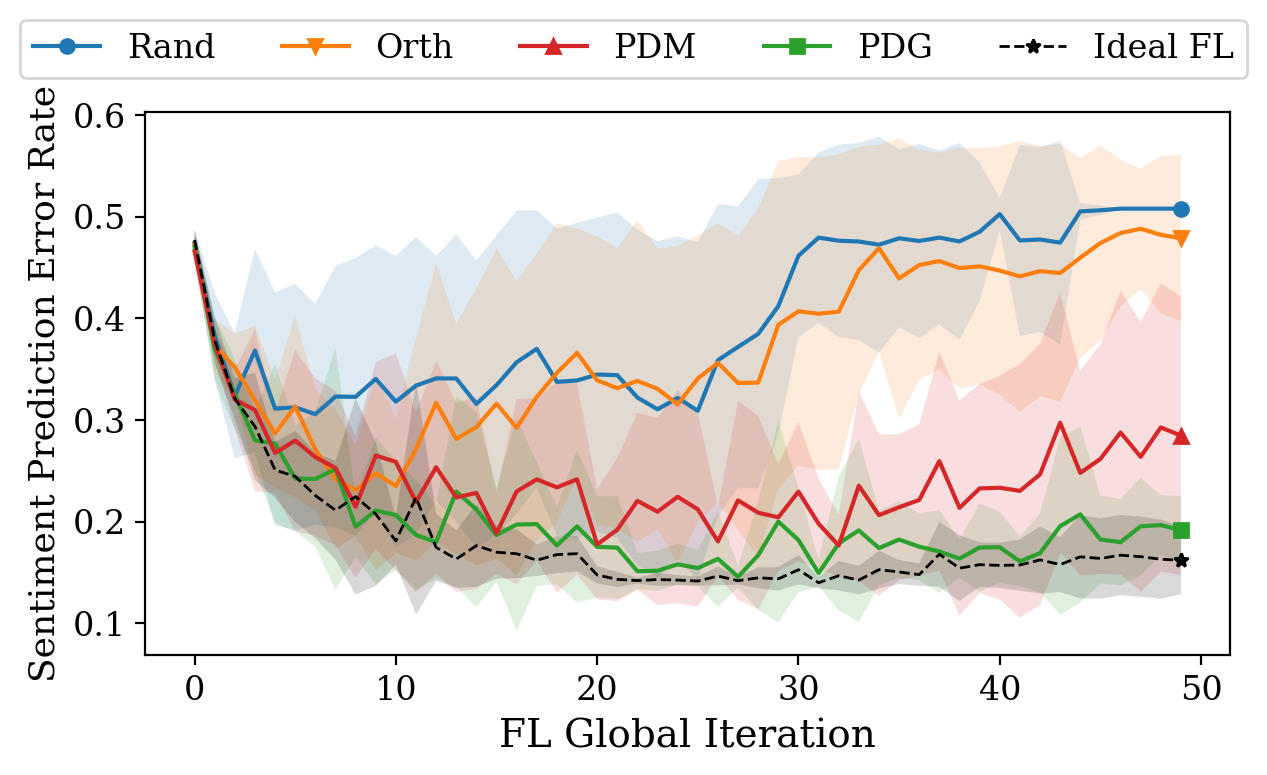}
    \caption{
    \revmajor{Federated IMDb sentiment classification error rates of the global FL model as the number of global iterations grows.
    Lower values reflect better power allocation.}
    }
    \label{ff:fl-imdb}
\vspace{-1em}
\end{figure}

\vspace{2mm}
\noindent\textbf{\revminor{Image classification.}}\label{exp:cls-iid}
We use the MNIST dataset~\cite{lecun1998gradient} for multiclass classification, which consists of $28{\times}28$ binary images (later flattened as \revminor{$784$-element} vectors) representing handwritten digits from 0 through 9. 
This task follows the same essential setups as the regression case, \revmajor{also including the number of hidden nodes and non-linearity in the FL model. }
In the meantime, we make some modifications to better suit the multi-class classification task. 
The input dimension is increased to 784, and the output dimension is set to 10 to accommodate the classification of 10 classes. 
Additionally, we use the cross-entropy loss, a learning rate of $0.001$, and 16 samples per mini-batch for local model training based on experimentation.

More notably, having a large number of MNIST samples available, we are able to increase the size of local datasets by $25{\times}$.
It therefore enables each worker to hold between 500 to 5000 data samples. 
Subsequently, instead of performing a full local epoch at each FL iteration, we utilize a fixed number of 5 local iterations that may not necessarily transverse all local samples. 
A total of 100 FL iterations are performed.
This strategy facilitates synchronization and improves efficiency by minimizing wait time for workers with a large number of samples to complete training.
These modifications are necessary to ensure that the FL model can adequately handle the classification task and produce reliable results.

Similar to the regression case, we report the average performance \revminor{in Fig.~\ref{ff:fl-mnist}}, measured by the prediction error rate, on a left-out test set of 1000 samples over 10 random realizations. 

\begin{figure}[t]
\centering
    \includegraphics[width=8cm,
    trim=0.5cm .5cm 0.5cm 0, 
    ]{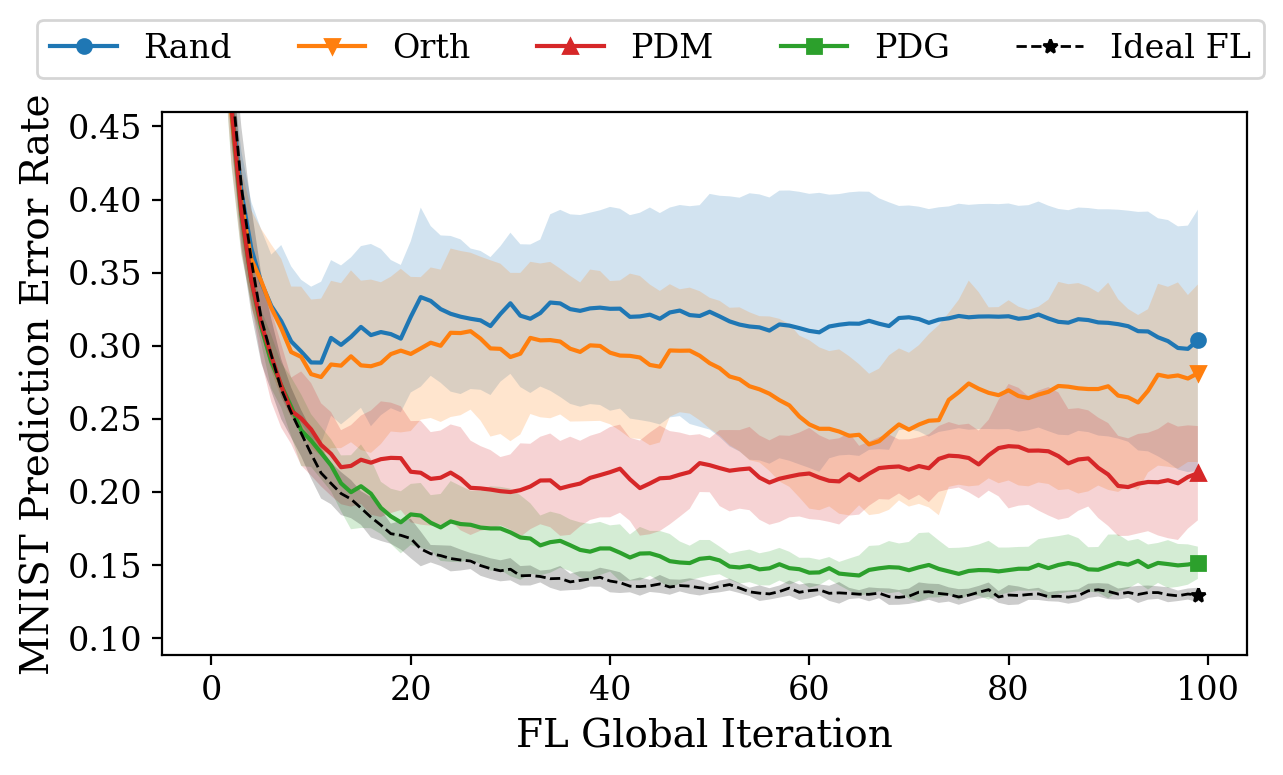}
    \caption{Federated MNIST classification based on different power methods. 
    Error rate of the FL global model at the end of each FL iteration.
    \revminor{Lower values reflect better power allocation.}
    }
    \label{ff:fl-mnist}
\vspace*{-1em}
\end{figure}

\revmajor{
The key takeaways are consistent across Figs.~\ref{ff:fl-uci},~\ref{ff:fl-imdb}, and~\ref{ff:fl-mnist}. 
Tested with various data and learning models, FL with PDG consistently outperforms all other competitors.
It is showcased that PDG accelerates the convergence speed of wireless FL, stabilizes the learning processes, and narrows the performance gap toward the ideal FL scenario.
In particular, Fig.~\ref{ff:fl-imdb} provides clear examples of federated convergence failure when the transmit power is allocated using naive policies, namely Rand and Orth.
This can be attributed to the large size of LSTM parameters and the complexity of the text task, underlining once again the significance of selecting an appropriate power allocation method for challenging FL tasks.
}

\vspace*{-1em}
\subsection{Non-i.i.d. Data Distribution}\label{ss:het}

Non-i.i.d. assumptions are reasonable in cases where the FL workers have access to different classes of labels and quality of data.
In this section, we employ the MNIST dataset once again to show two typical non-i.i.d. cases where 
i)~Local datasets contain additive white Gaussian noise of different levels (Fig.~\ref{ff:gaus-mnist}), and 
ii)~The label distribution in each worker is skewed (Fig.~\ref{ff:dir-mnist}).

In the first non-i.i.d. scenario, local \emph{feature} distributions are skewed, i.e., $\bbn_i{\,\sim\,}\ccalN_{k_i\times w\times h}(0,\eta_i^2)$, where $\bbn_i$ is the noise added to the $k_i$ images of size $w{\,\times\,}h$ at worker $i$, and $\eta_i^2$ is the worker-specific noise variance. 
{
Intuitively, the less noisy the data of a worker, the more important it should be.
Hence, we define an empirical mapping from noise levels to importance scores as $\omega_i{\,=\,}\exp[2-\frac{\eta_i-\min(\bbeta)}{30}],\forall i$.
Fig.~\ref{ff:gaus-mnist} displays the worker importance scores $\bbomega$ as will appear in~\eqref{e:wsq} and the corresponding noisy versions of a digit.} 
This is for illustrative purposes only, since the actual dataset contains different images for each worker, and therefore they would not have different noisy versions of the same digit.

In the second non-i.i.d. scenario, local \emph{label} distributions are skewed. 
In other words, FL workers are prone to imbalanced data labels on top of uneven local data quantities.
We simulate this imbalance using a Dirichlet distribution, which is a common approach to create a controllable level of label skewness by partitioning a balanced dataset into subsets containing imbalanced labels~\cite{yurochkin2019bayesian,wang2020tackling}.
More precisely, we sample $\bbtheta_d{\,\sim\,}\text{Dir}_L(\kappa)$, with $\kappa$ being a concentration parameter, and assign a $\theta_{d,i}$ proportion of class-$d$ data to worker $i$.
Tuning $\kappa$ smaller results in more imbalanced distributions. 
We tested several choices of $\kappa$ and empirically fix on $\kappa{\,=\,}1$.
Having partitioned the entire dataset, we randomly sample a desired number $k_i$ of data instances as the local dataset for each worker $i$.
The consequent local label histograms can be found in Fig.~\ref{ff:dir-mnist}.

\begin{figure}[t]
\begin{subfigure}{\linewidth}
\centering
    \includegraphics[width=8.2cm,height=2.4cm,
    trim=1cm 10cm 3cm 0]{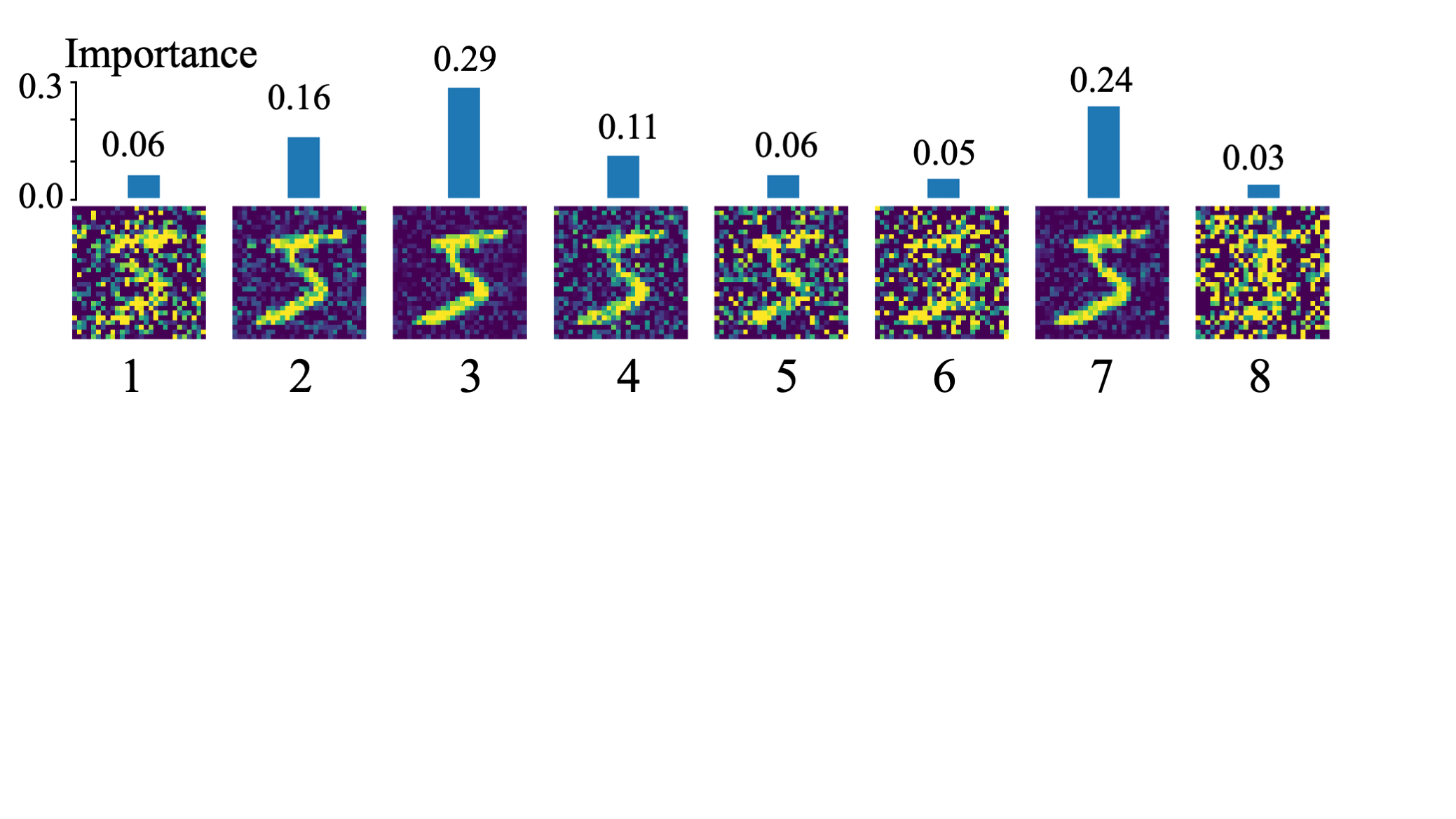}
    \caption{Heterogeneous additive Gaussian noise.
    The bars denote corresponding worker weights that reflect local data quality. 
    }
    \vspace{1em}
    \label{ff:gaus-mnist}    
\end{subfigure}\\
\begin{subfigure}{\linewidth}
\centering
    \includegraphics[width=8.4cm,height=3.1cm,
    trim=0 .5cm 0 .5cm]{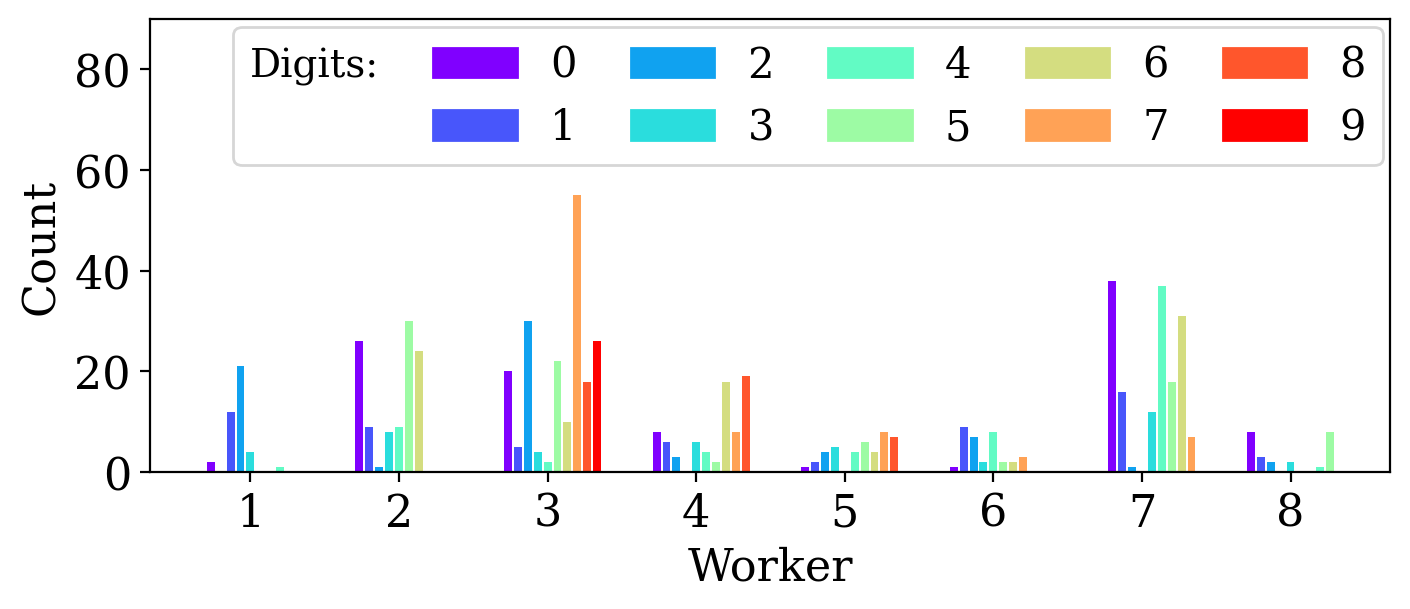}
    \caption{Skewed label histograms drawn from a Dirichlet distribution.
    }
    \label{ff:dir-mnist}    
\end{subfigure}
\vspace{-1.5em}
\caption{
Heterogeneous local data distribution types.
}
\label{ff:noisy-mnist}    
\vspace{-.5em}
\end{figure}

Fig.~\ref{ff:non-iid-mnist} summarizes the results of non-i.i.d. wireless FL experiments in which the power schemes are determined by the candidate and proposed allocation approaches. 
It can be observed that the FL models resulting from PDG always produce more stable and accurate performance than competitors in learning to classify handwritten digits.
These results suggest that PDG can serve as a promising power solution for enabling efficient and reliable wireless FL in real-world scenarios.

\begin{figure}[t]
\centering
    \includegraphics[width=7.6cm,
    trim=0.1cm .5cm 1.2cm 0cm, clip
    ]{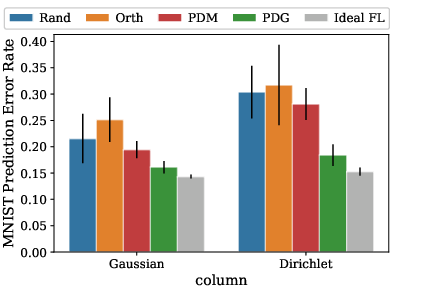}
    \vspace{-.5em}
    \caption{Federated MNIST classification performance averaged across 5 random realizations.
    Results are shown for both Gaussian (noisy data) and Dirichlet (imbalanced labels) non-i.i.d. scenarios.}
    \label{ff:non-iid-mnist}
\vspace*{-1em}
\end{figure}

\vspace*{-.5em}
\section{Conclusion}\label{s:end}

We developed a data-driven approach that can efficiently allocate transmit power for FL workers given channel instances and limited local information.  
The allocation is based on GCNs whose parameters are optimized through a primal-dual learning algorithm. 
Formal statements regarding the optimality of the proposed method were presented.
From a practical viewpoint, we successfully demonstrated the efficacy of the proposed method in (approximately) solving the associated non-convex constrained optimization problem.
Through extensive analyses and experiments, we have also highlighted additional appealing characteristics of the proposed approach such as robustness to the number of workers.
In addition, we have verified the proposed method in various challenging wireless and FL scenarios such as imperfect channel information and non-i.i.d. data at local workers.

In the future, there lies an intriguing avenue for modeling more variables, such as channel coherence time, worker mobility or velocity, and time-varying constraint bounds, which are often encountered in real-world scenarios. 
We are also interested in enhancing PD learning with adaptive step size when updating the variables and advanced graph network architectures.
Incorporating these additional dimensions would enrich our resource management capabilities for wireless FL systems and applications.

\vspace{-.5em}
\appendices
\section{}\label{ap:notations}

\begin{spacing}{1}
\begin{longtblr}[
  caption = \revmajor{Important notation.},
  label = {tab:notation},
]{
  colspec = {rp{188pt}},
  rowhead = 1,
} 
\toprule
Notation & Description\\
\hline
$L$    &  Number of workers in the system.\\
$i$    &  Index of workers, $\forall\,i=1,...,L$.\\
$k_i$  &  Number of data samples in worker $i$.\\
$K$    &  Total number of data samples in all workers. \\
$\omega_i$ &  Weight of worker $i$, characterizing how important it is compared to other workers. \\
$\bbw_i$  &  Local FL model parameters in worker $i$. \\
$\bbw$    &  Global FL model parameters in the base station (BS). \\
$\nabla$  &  Gradient operator. \\
$\bbH$ &  Channel-state information (CSI) matrix of size $L\times L$.\\
$\alpha_i$ &  Channel gain, i.e., the $i^\text{th}$ diagonal element of $\bbH$.\\
$\beta_{i,j}$ &  Interference coefficient, i.e., the element at $i^\text{th}$ row and $j^\text{th}$ column in $\bbH$.\\
$p$ &  Power allocation policy that decides how much power is allocated to perform the transmission in every worker.\\
$\bbp$ &  Transmit power vector of length $L$, which can be determined by $p$.\\
$P_{\max}$  &  Maximum transmit power allowed, which is a constant scalar value.\\
Link $i$ &  Transmission link from worker $i$ to BS.\\
$R_i$    &  Maximum data rate of link $i$.\\
$Z(\bbw_i)$ &  Data size to be transmitted through link $i$. \\
$S_i$ & $=$\(\left\{\begin{array}{rl}
0,  & \text{if the received data fails CRC} \\
1,  & \text{otherwise} \end{array} \right.\)\\
$\tau_i$ &  Transmission time from worker $i$ to BS.\\
$e_{\tot,i}$ & Total energy that worker $i$ consumes.\\
$r_0$, $e_0$ & Minimum constraint constants with regard to delay and energy requirements, respectively. \\
$\bbr$, $\bbe$, $\bby$ & Augmented primal variables that constrain the conditional expectations of data rate and energy efficiency, as well as the ergodic average of packet success rate (PSR), respectively.\\
$\bblambda_{\{r,e,y\}}$ & Corresponding dual variables introduced by the dual problem formulation. \\
$\gamma_{\{r,e,y\}}$ & Predefined step sizes to update the corresponding variables in the PD algorithm.\\
$f_{\{r,e,y\}}$ & Functions that compute worker-wise data rate, energy efficiency, and PSR values, respectively.\\
$\Psi$, $\psi_t$ & A $T$-layer GCN network and its $t^\text{th}$ layer, $\forall\,t=1,...,T$. \\
$\bbTheta^{(t)}$ & Network parameters in the $t^\text{th}$ layer. \\
$\bbZ^{(t)}$ & Output of the $t^\text{th}$ layer. \\
$d_{\{0,1,...,T\}}$ & Feature dimensions: $d_0$ for input, $d_{\{1,...,T-1\}}$ for hidden, and $d_T$ for output.\\
$\varphi$ & Non-linear activation function.\\
\bottomrule
\end{longtblr}
\end{spacing}

\section{Proof of Lemma~\ref*{L:1}}\label{ap:a}

Consider $\bbxi^1, \bbxi^2{\,\in\,}\ccalC$ with corresponding $p^1$ and $p^2$, respectively, such that the transmit power vectors are $\bbp^{1}{\,=\,}p^{1}(\bbH)$ and $\bbp^{2}{\,=\,}p^{2}(\bbH)$.
To show that $\ccalC$ is convex, we must show that $\alpha\bbxi^1 + (1 - \alpha)\bbxi^2 \in \ccalC$ for any $\alpha{\,\in\,}(0,1)$. 
Equivalently, for every $\alpha$ we need to define a power policy $p^\alpha{\,\in\,}\ccalP$ such that\looseness=-1
\begin{align*}
    v(p^\alpha)&\geq \alpha v(p^1) + (1-\alpha)v(p^2), \text{ and}\\
    u_i(p^\alpha)&\geq \alpha u_i(p^1) + (1-\alpha)u_i(p^2) + c_i, \forall i.
\end{align*}
Define a probability space $(\ccalH, \mbB(\ccalH), \mu)$, where $\ccalH{\,\in\,}\mbR^{L{\times}L}$ is the set of possible CSI matrices $\bbH$, $\mbB(\ccalH)$ is the $\sigma$-algebra of the Borel sets of $\ccalH$, and $\mu$ is a probability measure. 
For any $\ccalB{\,\in\,}\mbB(\ccalH)$, $i \in \{1, \ldots, L\}$ and $j \in \{1, 2\}$, we define the following quantity 
\begingroup
\begin{equation}\label{e:w_e}
E_{i}^{j}(\ccalB) =
\begin{cases}
  0 & \text{if } \rho_{i}^{j}=0,\\
  \frac{1}{\rho_{i}^{j}}\int_{\ccalB\cap\ccalH_i^j} {\fc}_i(\bbp^{j},\bbH)d\mu & \text{otherwise},
\end{cases}       
\end{equation}
\endgroup
where 
\begingroup
\begin{equation*}
    \rho_{i}^{j}{\,=\,}\int_{\ccalH}\!\mathds{1}(p_i^j{\,>\,}0)d\mu
\end{equation*}
\endgroup
and $\ccalH_i^j{\,=\,}\{\bbH\,|\,p_i^j(\bbH){\,>\,}0\}$.
Then, let us define the vector measure
\begingroup
\abovedisplayskip=4pt
\begin{equation}\label{e:w_def}
    \bbm(\ccalB)=[E_{1}^{1}(\ccalB), \hdots, E_{L}^{1}(\ccalB),
    E_{1}^{2}(\ccalB), \hdots, E_{L}^{2}(\ccalB)]^\top.
\end{equation}
\endgroup
{Also note that the element defined in~\eqref{e:w_e} is indexed by $\ind{i,j}{\,=\,}(j{\,-\,}1)L{\,+\,}i$ in $\bbm(\ccalB)$.}

\vspace{3mm}

\begin{claim}
The function $\bbm$ is a non-atomic measure on $\mbB(\ccalH)$.
\end{claim}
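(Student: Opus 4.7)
The plan is to verify that $\bbm$ satisfies the two defining properties of a non-atomic vector measure: (i) countable additivity together with $\bbm(\emptyset)=\bbzero$, and (ii) the absence of atoms, where the latter relies crucially on the non-atomicity of $\mu$ granted by \hyperref[as:1]{(AS1)}.

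First, I would verify the measure axioms componentwise. When $\ccalB=\emptyset$, every intersection $\ccalB\cap\ccalH_i^j$ is empty, so both cases in~\eqref{e:w_e} return $E_i^j(\emptyset)=0$, giving $\bbm(\emptyset)=\bbzero$. For countable additivity, fix pairwise disjoint $\{\ccalB_k\}\subset\mbB(\ccalH)$. The case $\rho_i^j=0$ gives $E_i^j(\bigcup_k\ccalB_k)=0=\sum_k E_i^j(\ccalB_k)$ trivially. Otherwise, the family $\{\ccalB_k\cap\ccalH_i^j\}$ remains pairwise disjoint, so by countable additivity of the Lebesgue integral, $E_i^j(\bigcup_k\ccalB_k)=\frac{1}{\rho_i^j}\int_{(\bigcup_k\ccalB_k)\cap\ccalH_i^j}{\fc}_i(\bbp^{j},\bbH)\,d\mu=\sum_k E_i^j(\ccalB_k)$. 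Countable additivity of each of the $2L$ scalar components transfers to the vector-valued $\bbm$.

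Next, for non-atomicity I would invoke Lyapunov's convexity theorem. Each component $E_i^j$ is (a scalar multiple of) an indefinite Lebesgue integral against $\mu$ restricted to $\ccalH_i^j$, so it is a finite signed measure absolutely continuous with respect to $\mu$. Since $\mu$ is non-atomic by \hyperref[as:1]{(AS1)}, absolute continuity transfers non-atomicity to every component $E_i^j$. Lyapunov's theorem then guarantees that for every $\ccalB\in\mbB(\ccalH)$ the range $\{\bbm(\ccalB'):\ccalB'\in\mbB(\ccalH),\,\ccalB'\subset\ccalB\}$ is a convex subset of $\mbR^{2L}$. Since this range contains both $\bbzero$ and $\bbm(\ccalB)$, it also contains $\frac{1}{2}\bbm(\ccalB)$, so whenever $\bbm(\ccalB)\neq\bbzero$ there exists $\ccalB'\subset\ccalB$ with $\bbm(\ccalB')\neq\bbzero$ and $\bbm(\ccalB\setminus\ccalB')\neq\bbzero$, ruling out atoms.

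The main obstacle is confirming that Lyapunov's hypotheses hold uniformly across all $2L$ components, which reduces to showing each ${\fc}_i(\bbp^{j},\cdot)$ is $\mu$-integrable so that the corresponding $E_i^j$ is a finite signed measure; this is implicit in the expected-value constraints of~\eqref{e:p1g} and is reinforced by the Lipschitz hypothesis~\hyperref[as:4]{(AS4)}. An alternative elementary route would iteratively bisect $\ccalB$ using non-atomicity of $\mu$ to split one component at a time, but extending that argument to simultaneously control all $2L$ components quickly becomes cumbersome, so Lyapunov remains the cleanest path.
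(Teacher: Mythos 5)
Your proof is correct, and the first half (null set, non-negativity, countable additivity verified componentwise via disjointness of $\{\ccalB_k\cap\ccalH_i^j\}$) matches the paper's argument essentially verbatim. Where you diverge is in the non-atomicity step. The paper argues by contradiction: it posits an atom $\ccalA$ and splits it into $\ccalA_S$ and $\ccalA_S^\complement$ both meeting $\ccalH_i^j$, appealing to the non-atomicity of $\mu$ under \hyperref[as:1]{(AS1)}. You instead observe that each scalar component $E_i^j$ is (up to the constant $1/\rho_i^j$) an indefinite integral against $\mu$ restricted to $\ccalH_i^j$, hence absolutely continuous with respect to the non-atomic $\mu$, and a finite measure absolutely continuous with respect to a non-atomic measure is itself non-atomic; componentwise non-atomicity then rules out atoms of the vector measure, since an atom would have some nonzero component that could be strictly split. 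This is arguably the tighter argument: the paper's splitting step only guarantees that both pieces intersect $\ccalH_i^j$, which by itself does not force the integral over each piece to be strictly between $0$ and the atom's value, whereas absolute continuity delivers exactly the needed strict splitting. One caveat on your write-up: the Lyapunov paragraph is logically redundant, and slightly inverted, since Lyapunov's convexity theorem takes non-atomicity as a \emph{hypothesis}; using its conclusion to ``rule out atoms'' only avoids circularity because you have already established non-atomicity via absolute continuity in the preceding sentence. In the paper, Lyapunov is invoked only \emph{after} this claim, in the body of the proof of Lemma~\ref{L:1}, with the claim serving as its hypothesis — so you could simply delete that paragraph. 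Your closing remark about integrability of ${\fc}_i(\bbp^j,\cdot)$ is a fair point that the paper leaves implicit.
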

\begin{claimproof}
{
Let us start by showing that $\bbm$ is a valid measure.
The null empty set and non-negativity of $\bbm$ immediately follow from the definition in~\eqref{e:w_e}.
As for the countable disjoint additivity, under the condition of non-zero $\rho_i^j$ values -- otherwise it is apparent --, and adopting the notation ${\fc}_i^j = {\fc}_i(\bbp^{j},\bbH)$ we have
\begingroup
\allowdisplaybreaks
\begin{align*}
&\left[\sum_k\bbm(\ccalB_k)\right]_{\ind{i,j}} 
    = \frac{1}{\rho_{i}^{j}}\sum_k\int_{\ccalB_k\cap\ccalH_i^j} {\fc}_i(\bbp^{j},\bbH) d\mu \\ 
    & =
    \frac{1}{\rho_{i}^{j}}\int_{(\cup\ccalB_k)\cap\ccalH_i^j} {\fc}_i(\bbp^{j},\bbH) d\mu = \left[\bbm(\cup\ccalB_k)\right]_{\ind{i,j}},
\end{align*}
\endgroup
where the second equality follows from the fact that $\{\ccalB_k\}$'s are disjoint.
}

On the non-atomicity of $\bbm$, assume otherwise. 
If $\bbm$ is atomic, there exists an atom $\ccalA{\,\in\,}\mbB(\ccalH)$ with $\bbm(\ccalA){\,=\,}\bbepsilon$ with at least one strictly positive element, and any $\ccalA_S{\,\subseteq\,}\ccalA$ would imply either $\bbm(\ccalA_S){\,=\,}\bbzero$ or $\bbm(\ccalA_S){\,=\,}\bbepsilon$.
For any qualified $\ccalA$, we know that $\rho_{i}^{j}{\,>\,}0$ and $\ccalA{\,\cap\,}\ccalH_i^j{\,\neq\,}\varnothing$.
To reject the contradiction, all we need to show is that there exists a subset $\ccalA_S{\,\subset\,}\ccalA$ with complement $\ccalA_S^\complement{\,=\,}\ccalA{\setminus}\ccalA_S$ that satisfies $\ccalA_S{\,\cap\,}\ccalH_i^j{\,\neq\,}\varnothing$ and $\ccalA_S^\complement{\,\cap\,}\ccalH_i^j{\,\neq\,}\varnothing$, {which follows from the non-atomicity of measure $\mu$ on $\ccalH$, as assumed in \hyperref[as:1]{(AS1)}.}
\end{claimproof}

Notice that
\begingroup
\begin{equation*}
\bbm(\varnothing){\,=\!}\left[\begin{matrix}0\\
\vdots\\
0\\
0\\
\vdots\\
0\end{matrix}\right], \text{ and }
\bbm(\ccalH){\,=\!}\left[
\begin{matrix}\mbE[{\fc}_{1}(\bbp^{1},\bbH){\,|\,}p_{1}^{1}{\,>\,}0]\\
\vdots\\
\mbE[{\fc}_{L}(\bbp^{1},\bbH){\,|\,}p_{L}^{1}{\,>\,}0]\\
\mbE[{\fc}_{1}(\bbp^{2},\bbH){\,|\,}p_{1}^{2}{\,>\,}0]\\
\vdots\\
\mbE[{\fc}_{L}(\bbp^{2},\bbH){\,|\,}p_{L}^{2}{\,>\,}0]
\end{matrix}\right].        
\end{equation*}
\endgroup
The Lyapunov convexity theorem~\cite{nla.cat-vn1158105}, enabled by \hyperref[as:1]{(AS1)}, ensures that the range of $\bbm$ is convex.
Hence, for any $\alpha{\,\in\,}[0,1]$ we have that 
\begingroup
\abovedisplayskip=2pt
\begin{equation*}
    \bbm_0 = \alpha\bbm(\ccalH)+(1-\alpha)\bbm(\varnothing)
\end{equation*}
\endgroup
belongs to the range of $\bbm$, i.e., there exists some measurable set $\ccalB_0$ such that 
\begin{equation*}
    \bbm(\ccalB_0) = \alpha\bbm(\ccalH),  \text{ and }
    \bbm(\ccalB_0^\complement) = (1-\alpha)\bbm(\ccalH),
\end{equation*}
where $\ccalB_0^\complement = \ccalH \setminus \ccalB$.
Thus, let us define a power policy $p^\alpha{\,\in\,}\ccalP$:
\begingroup
\begin{equation*}
p^\alpha(\bbH) = \Big\{\,
\begin{matrix}
    p^1(\bbH) \text{, if } \bbH\in\ccalB_0,\\
    p^2(\bbH) \text{, if } \bbH\in\ccalB_0^\complement.
\end{matrix}
\end{equation*}
\endgroup
Let us compute
\begingroup
\allowdisplaybreaks
\abovedisplayskip=0pt
\begin{align*}
\mbE&\left[{\fc}_i(\bbp^{\alpha},\bbH){\,|\,}p_{i}^{\alpha}{\,>\,}0\right]
= \frac{\int_{\ccalH_{i}^{\alpha}}{\fc}_i(\bbp^{\alpha},\bbH)d\mu}
{\int_{\ccalH}\mathds{1}(p_i^{\alpha}>0)d\mu}\\
&= \frac{\int_{\ccalB_0\cap\ccalH_{i}^{1}}{\fc}_i(\bbp^{1},\bbH)d\mu + \int_{\ccalB^\complement_0\cap\ccalH_{i}^{2}}{\fc}_i(\bbp^{2},\bbH)d\mu}{\int_{\ccalB_0}\!\mathds{1}(p_i^1{\,>\,}0)d\mu{\,+\!}\!\int_{\ccalB_0^\complement}\!\mathds{1}(p_i^2{\,>\,}0)d\mu}\\
&= \frac{\int_{\ccalB_0\cap\ccalH_{i}^{1}}{\fc}_i(\bbp^{1},\bbH)d\mu}{\rho_i^1} +  
\frac{\int_{\ccalB^\complement_0\cap\ccalH_{i}^{2}}{\fc}_i(\bbp^{2},\bbH)d\mu}{\rho_i^2},
\end{align*}
\endgroup
where we have used \hyperref[as:3]{(AS3)} for the last equality.
Recalling the definition in~\eqref{e:w_e}, it then follows that\looseness=-1
\begingroup
\allowdisplaybreaks
\begin{align*}
\mbE&\left[{\fc}_i(\bbp^{\alpha},\bbH){\,|\,}p_{i}^{\alpha}{\,>\,}0\right]
= E_{i}^{1}(\ccalB_0) + E_{i}^{2}(\ccalB_0^\complement) 
 \\
 &= \alpha \mbE[{\fc}_i(\bbp^{1},\bbH){\,|\,}p_{i}^{1}{\,>\,}0] + (1-\alpha)\mbE[{\fc}_i(\bbp^{2},\bbH){\,|\,}p_{i}^{2}{\,>\,}0]\\
&\geq \alpha(\xi^{1}_{i} + c_i) + (1-\alpha) (\xi^{2}_{i} + c_i) = \left(\alpha\xi^{1}_{i} + (1-\alpha) \xi^{2}_{i}\right) + c_i.
\end{align*}
\endgroup
Moreover, we can compute $v(p^\alpha)$ to obtain
\begingroup
\allowdisplaybreaks
\begin{align*}
    & v(p^\alpha) = g\!\left(\mbE\left[f_0(\bbp^{\alpha},\bbH)\right]\right) = g\!\left( \int_\ccalH f_0(\bbp^\alpha,\bbH)d\mu \right)\\
    & = g\!\left(\int_{\ccalB_0}f_0(\bbp^{1},\bbH)d\mu + \int_{\ccalB_{0}^{\complement}}f_0(\bbp^{2},\bbH)d\mu\right)\\
    &= g\!\left( \alpha\mbE[f_0(\bbp^{1},\bbH)] + (1-\alpha)\mbE[f_0(\bbp^{2},\bbH)] \right),
\end{align*}
\endgroup
where the last equality follows, again, from the Lyapunov convexity theorem given by a non-atomic measure similar to what was done with $\bbm$ [c.f.~\eqref{e:w_def}].

Due to the concavity of $g$ in \hyperref[as:2]{(AS2)} we have that 
\begin{equation*}
    v(p^\alpha)\geq \alpha v(p^1) + (1-\alpha) v(p^2) 
    \geq \alpha \bbxi^1 + (1-\alpha)\bbxi^2.
\end{equation*}

This shows that $\alpha \bbxi^1 + (1-\alpha)\bbxi^2{\,\in\,}\ccalC$, as wanted. 

\vspace*{-1em}
\section{Proof of Theorem~\ref*{T:2}}\label{ap:b}

Setting aside the parameterization of the power policy function for now, let us revisit the unparameterized problem but incorporating variables $\bby$ and $\bbc$
\begingroup
\allowdisplaybreaks
\begin{subequations*}
    \begin{alignat*}{3}
        P^\star &= \max\limits_{p,\bby,\bbc}\,\, g(\bby),\\
        \text{s.t.} \quad & {\bbz}(p)\geq \bby, \,\, u_i(p) \geq c_i,\,\, \forall\,i, \\
         & \bby \geq \bbzero, \bbc{\,\in\,}[\bbc_{0},+\infty),\,\, 
         p{\,\in\,}\ccalP, \,\,
         \forall\,\bbH,
    \end{alignat*}
\end{subequations*}
\endgroup
where ${\bbz}(p){\,=\,}\mbE{\left[\,f_0(\bbp,\bbH)\,\right]}$ is the elementwise expectation of the objective function, and 
${u}_i(p)=\mbE{\left[{\fc}_i(\bbp,\bbH) \, | \, {p}_i\!> \!0\right]}$ is the conditional expectation of constraints, with $\bbp{\,=\,}p(\bbH)$ as the allocated power. 
Its Lagrangian is as follows
\begin{equation*}
    \ccalL(p,\bby,\bbc,
    \bblambda_y,\bblambda_c ) = g(\bby)
    + \bblambda_y^\top (\bbz(p) - \bby) 
    + \bblambda_c^\top (\bbu(p) - \bbc).
\end{equation*}

Using this notation, we have 
\begin{equation*}
D^\star = \min\limits_{\bblambda_y, \bblambda_c \geq \bbzero}\,\, \max\limits_{p\in\ccalP, \bby{\geq}\bbzero, \bbc{\geq}\bbc_0}\ccalL(p,\bby, \bbc,\bblambda_y, \bblambda_c).
\end{equation*}

With the parameterization of the power allocation function using a deep neural architecture, we have $\bbp_{\psi}{\,=\,}p_{\psi}(\bbH;\bbTheta)$ and
\begingroup
\medmuskip=0mu
\thinmuskip=0mu
\thickmuskip=0mu
\nulldelimiterspace=0pt
\scriptspace=0.5pt
\belowdisplayskip=0pt
\begin{align}\label{e:D_psi_star}
    D_{\psi}^\star = \min\limits_{\bblambda_y, \bblambda_c\geq \bbzero} \Big\{
    \max\limits_{\bby{\geq}\bbzero, \bbc{\geq}\bbc_0}\,
    v_p(\bby, \bbc, \bblambda_y, \bblambda_c) 
    +\max\limits_{\bbTheta\in \ccalP_{\psi}}\,
    v_d(\bblambda_y, \bblambda_c,\bbp_{\psi})\Big\},
\end{align}
\endgroup
where 
\begingroup
\abovedisplayskip=0pt
\begin{equation*}
    \begin{gathered}
        v_p(\bby, \bbc, \bblambda_y, \bblambda_c)=g(\bby) - \bblambda_y^\top\bby - \bblambda_c^\top\bbc,\\
        v_d(\bblambda_y, \bblambda_c,\bbp_{\psi}) = \bblambda_y^\top\mbE[f_0(\bbp_{\psi},\bbH)]
        +\bblambda_c^\top\mbE[f_c(\bbp_{\psi},\bbH){\,|\,}S],
    \end{gathered}
\end{equation*}
\endgroup
with $S$ denoting the event of worker selection. 
Only those workers whose transmit power is greater than zero can attempt to transmit their local parameters.
From \hyperref[as:3]{(AS3)} it follows that $\mathds{1}(\bbp{\,>\,}0){\,=\,}\mathds{1}(\bbp^{\star}{\,>\,}0)$ for any feasible allocation $\bbp$, where $\bbp^\star$ corresponds to the optimal policy.
Putting it differently, the conditional expectation of the constraint(s) will be always conditioned on the same event $S$.
Subsequently, we have that 
\begingroup
\allowdisplaybreaks
\begin{align*}
    \max\limits_{\bbTheta\in\ccalP_{\psi}}&\,v_d(\bblambda_y, \bblambda_c, \bbp_{\psi}) = v_d(\bblambda_y, \bblambda_c,\bbp^{\star})\\
    &+\max\limits_{\bbTheta\in\ccalP_{\psi}}\Big\{ \bblambda_y^\top\mbE[f_0(\bbp_{\psi},\bbH) - f_0(\bbp^{\star},\bbH)]\nonumber\\
    &+\bblambda_c^\top\mbE[f_c(\bbp_{\psi},\bbH)
    -f_c(\bbp^{\star},\bbH){\,|\,}S]\Big\}.\nonumber
\end{align*}
\endgroup

Since the parameterized power policy space $\ccalP_{\psi}{\,\in\,}\ccalP$, it must be that the maximized objective  $g(\bby_{\psi}^\star){\,\leq\,}g(\bby^\star)$.
Following the monotonicity of $g$ in \hyperref[as:2]{(AS2)} and the equality of the ergodic constraints due to optimality, the following inequalities should hold:
\begin{alignat*}{3}
    &\E{f_0(\bbp_{\psi},\bbH)} 
    &= \bby_{\psi}^{\star} \leq \bby^\star 
    &= \E{f_0(\bbp^\star, \bbH)}.
\end{alignat*}
On the conditional expectation of the constraint function, accounting for~\hyperref[as:4]{(AS4)}, we may apply H{\"o}lder's inequality~\cite{maligranda1998holder} and have 
\begingroup
\abovedisplayskip=2pt
\begin{align*}
    & \bblambda_c^\top\mbE[f_c(\bbp_{\psi},\bbH)
    -f_c(\bbp^{\star},\bbH){\,|\,}S] \\
    \leq &\|\bblambda_c\|_1 \|\E{f_c(\bbp_{\psi}^{\star},\bbH) - f_c(\bbp^{\star},\bbH)\,|\,S}\|_\infty\numberthis\label{e:up_bound_1}
\end{align*}
\endgroup
which, due to the convexity of the infinity norm and the continuity of the expectation by~\hyperref[as:4]{(AS4)}, can be further bounded as\looseness=-1
\begingroup
\begin{align}
    \eqref{e:up_bound_1} \leq L\|\bblambda_c\|_1 \|\bbp_{\psi}^{\star}-\bbp^{\star}\|_\infty 
    \leq L\epsilon \|\bblambda_c\|_1,\label{e:up_bound_2}
\end{align}
\endgroup
where the last inequality comes from the expressivity of the parameterization quantified by~\hyperref[as:5]{(AS5)}.
Meanwhile, the non-negativity of $\bblambda_y$ and $\bblambda_c$ gives that
\begingroup
\begin{align*}
   \max\limits_{\bbTheta\in\ccalP_{\psi}}\,v_d(\bblambda_y, \bblambda_c,\bbp_{\psi}) \leq v_d(\bblambda_y, \bblambda_c,\bbp^\star) + L\epsilon \|\bblambda_c\|_1.
\end{align*}
\endgroup
Thus, we have the following inequality: 
\begingroup
\allowdisplaybreaks
\medmuskip=1mu
\thinmuskip=1mu
\thickmuskip=1mu
\nulldelimiterspace=0pt
\scriptspace=0pt
\begin{align*}
    &D_{\psi}^\star \leq \min\limits_{\bblambda_y, \bblambda_c \geq \bbzero} \Big\{\max_{\substack{\bby\geq\bbzero\\\bbc\geq\bbc_0}}\,v_p(\bby, \bbc, \bblambda_y, \bblambda_c)+v_d(\bblambda_y, \bblambda_c,\bbp^\star) + L\epsilon \|\bblambda_c\|_1\Big\}\\
    &=\min\limits_{\bblambda_y, \bblambda_c \geq \bbzero}\Big\{\max_{\substack{\bby\geq\bbzero\\\bbc\geq\bbc_0}}\,v_p(\bby, \bbc, \bblambda_y, \bblambda_c)+\max_{p\in \ccalP}\,v_d(\bblambda_y, \bblambda_c,\bbp)  + L\epsilon \|\bblambda_c\|_1\Big\} \\
    &=\,\,D^\star\,+\,L\epsilon \|\bblambda_c^\star\|_1,
\end{align*}
\endgroup
in which $D^\star{\,=\,}P^\star$ due to~\Cref{T:1} and stands for the upper bound in~\Cref{T:2}.
This leaves us the lower bound to prove in the remaining.

For all $p{\,\in\,}\ccalP$, it holds that
\begingroup
\allowdisplaybreaks
\begin{align*}
    &\max_{\bbTheta\in\ccalP_{\psi}}\,v_d(\bblambda_y, \bblambda_c,\bbp_{\psi}) = -\min\limits_{\bbTheta\in\ccalP_{\psi}}\,-v_d(\bblambda_y, \bblambda_c,\bbp_{\psi})\\
    &= v_d(\bblambda_y, \bblambda_c,\bbp) -[v_d(\bblambda_y, \bblambda_c,\bbp) + \min\limits_{\bbTheta\in\ccalP_{\psi}}-v_d(\bblambda_y, \bblambda_c,\bbp_{\psi})]\\
    &= v_d(\bblambda_y, \bblambda_c,\bbp)
    -\min\limits_{\bbTheta\in\ccalP_{\psi}}\big\{\bblambda_y^\top\mbE[
    f_0(\bbp,\bbH)- f_0(\bbp_{\psi},\bbH)]\\
    &\hspace*{68pt}+\bblambda_c^\top\mbE[f_c(\bbp,\bbH) - f_c(\bbp_{\psi},\bbH)\,|\,S]\big\}.\numberthis\label{e:lowb_eq1}
\end{align*}  
\endgroup

Following an approach similar to that used in~\eqref{e:up_bound_1} --~\eqref{e:up_bound_2}, we have that 
\begingroup
\allowdisplaybreaks
\begin{align*}
    \min\limits_{\bbTheta\in\ccalP_{\psi}}\{\bblambda_y^\top\mbE[f_0(\bbp,\bbH)&- f_0(\bbp_{\psi},\bbH)] \\
    &+ \bblambda_c^\top\E{f_c(\bbp,\bbH) - f_c(\bbp_{\psi},\bbH)\,|\,S}\}\\
    &\hspace*{-80pt}\leq\|\bblambda_y\|_1 \min\limits_{\bbTheta\in\ccalP_{\psi}}\|\E{f_0(\bbp,\bbH) - f_0(\bbp_{\psi},\bbH)}\|_\infty\\
    &\hspace*{-50pt}+\|\bblambda_c\|_1 \min\limits_{\bbTheta\in\ccalP_{\psi}}\|\E{f_c(\bbp,\bbH) - f_c(\bbp_{\psi},\bbH)\,|\,S}\|_\infty\\
    &\hspace*{-80pt}\leq \min\limits_{\bbTheta\in\ccalP_{\psi}} \left\{ L\|\bbp-\bbp_{\psi}\|_\infty\right\}\left(\|\bblambda_y\|_1 + \|\bblambda_c\|_1\right)\\
    &\hspace*{-80pt}\leq L\epsilon \left(\|\bblambda_y\|_1 + \|\bblambda_c\|_1\right),\numberthis\label{e:ineq_conti}
\end{align*} 
Then, substituting~\eqref{e:ineq_conti} from~\eqref{e:lowb_eq1}, we get 
\begin{align*}
    \eqref{e:lowb_eq1} \geq v_d(\bblambda_y, \bblambda_c,\bbp) - L\epsilon \left(\|\bblambda_y\|_1 + \|\bblambda_c\|_1 \right).
\end{align*} 
Following the definition~\eqref{e:D_psi_star}, we get
\begin{align}\label{e:d_psi_lb}
    D_{\psi}^{\star} \geq \min\limits_{\bblambda_y, \bblambda_c\geq \bbzero} \Big\{&
    \max_{\bby\geq\bbzero, \bbc\geq\bbc_0}\,v_p(\bby, \bbc, \bblambda_y, \bblambda_c)\\
    &+v_d(\bblambda_y, \bblambda_c,\bbp)- L\epsilon \left(\|\bblambda_y\|_1 + \|\bblambda_c\|_1\right)\Big\}.\nonumber
\end{align}
From $\bblambda_y,\bblambda_c{\,\geq\,}\bbzero$, it follows that $\|\bblambda_y\|_1{\,=\,}\bblambda_y^\top\bbone$, same for others. 
And, since the lower bound of $D_{\psi}^{\star}$ in~\eqref{e:d_psi_lb} holds for all $p{\,\in\,}\ccalP$, we can rewrite it as follows:
\begingroup
\begin{align}
    D_{\psi}^{\dag\star} &\geq \min\limits_{\bblambda_y, \bblambda_c\geq \bbzero} \Big\{
    \max_{\bby\geq\bbzero, \bbc\geq\bbc_0}v_p(\bby, \bbc, \bblambda_y, \bblambda_c)
    +\max_{p\in\ccalP}\Big[ \\
    &\bblambda_y^\top(\E{f_0(\bbp,\bbH)}{-}L\epsilon\bbone ) 
    +\bblambda_c^\top(\E{f_c(\bbp,\bbH)\,|\,S}{-}L\epsilon\bbone )\Big]\Big\},\nonumber
\end{align}
\endgroup
whose right-hand side is the dual value of the $(L\epsilon)$-perturbed version of the original problem~\eqref{e:p2g}, as stated in the following:
\begingroup
\abovedisplayskip=-2pt
\begin{subequations}
    \begin{alignat*}{3}
        P_{L\epsilon}^{\star}&=\max_{p,\bby,\bbc}\quad  g(\bby), \\
        \text{s.t.} \,\, & L\epsilon\bbone+\bby \leq \E{f_0(\bbp,\bbH)},\\
         & L\epsilon\bbone+\bbc \leq \E{f_c(\bbp,\bbH)\,|\,S},\\
         & p\in\ccalP, \bby\geq\bbzero, \bbc\geq\bbc_0,
    \end{alignat*}
\end{subequations}
\endgroup
whose strong duality has been shown by~\cite[Appendix~A]{eisen2019learning}, implying that $D_{\psi}^{\star}{\,\geq\,}D_{L\epsilon}^{\star}{\,=\,}P_{L\epsilon}^{\star}$.
According to the perturbation inequality that relates $P_{L\epsilon}^{\star}$ to $P^{\star}$:
\begingroup
\begin{equation*}
    P_{L\epsilon}^{\star}\geq P^{\star}-\left(\|\bblambda_y^\star\|_1 + \|\bblambda_c^\star\|_1\right)L\epsilon,
\end{equation*}
\endgroup
we have the lower bound on $D_{\psi}^{\star}$ as stated.
\vspace{-.5em}

\bibliographystyle{IEEEbib}
{
\linespread{.99}
\small\bibliography{IEEEabrv,references}
}

\end{document}